\newtheorem{theorem}{Theorem}[section] 
\newtheorem{lemma}{Lemma}[section] 
\newtheorem{corollary}{Corollary}[section]
\newtheorem{definition}{Definition}[section]
\newcommand{\customlabel}[2]{%
   \protected@write \@auxout {}{\string \newlabel {#1}{{#2}{\thepage}{#2}{#1}{}} }%
   \hypertarget{#1}{#2}
}
\newcommand{\E}{\mathbb{E}}
\newcommand{\Pb}{\mathbb{P}}
\newcommand{\var}{\text{var}}
\newcommand{\Pn}{\mathbb{P}_n}
\newcommand{\R}{\mathbb{R}}
\DeclareMathOperator{\diag}{diag}           
\def\ind{\perp\!\!\!\perp}
\newcommand{\dist}{\textsf{dist}}
\newcommand{\sol}{\mathsf{s}^*}
\newcommand{\cmmnt}[1]{\ignorespaces}  
\title{Doubly Robust Counterfactual Classification}
\author{%
  Kwangho Kim \\
  Harvard Medical School\\
  \texttt{kkim@hcp.med.harvard.edu} \\
   \And
   Edward H. Kennedy \\
   Carnegie Mellon University \\
   \texttt{edward@stat.cmu.edu} \\
   \AND
   José R. Zubizarreta \\
   Harvard University \\
   \texttt{zubizarreta@hcp.med.harvard.edu} 
}
\begin{document}

\maketitle

\begin{abstract}
We study counterfactual classification as a new tool for decision-making under hypothetical (contrary to fact) scenarios. We propose a doubly-robust nonparametric estimator for a general counterfactual classifier, where we can incorporate flexible constraints by casting the classification problem as a nonlinear mathematical program involving counterfactuals. We go on to analyze the rates of convergence of the estimator and provide a closed-form expression for its asymptotic distribution. Our analysis shows that the proposed estimator is robust against nuisance model misspecification, and can attain fast $\sqrt{n}$ rates with tractable inference even when using nonparametric machine learning approaches. We study the empirical performance of our methods by simulation and apply them for recidivism risk prediction.
\end{abstract}


\section{Introduction}
\label{sec:intro}

\textit{Counterfactual or potential outcomes} are often used to describe how an individual would respond to a specific treatment or event, irrespective of whether the event actually takes place. Counterfactual outcomes are commonly used for causal inference, where we are interested in measuring the effect of a treatment on an outcome variable  \citep{rubin1974estimating, holland1986statistics,hofler2005causal}. 

Recently, counterfactual outcomes have also proved useful for predicting outcomes under hypothetical interventions. This is commonly referred to as \textit{counterfactual prediction}. Counterfactual prediction can be particularly useful to inform decision-making in clinical practice. For example, in order for physicians to make effective treatment decisions, they often need to predict risk scores assuming no treatment is given; if a patient’s risk is relatively low, then she or he may not need treatment. However, when a treatment is initiated after baseline, simply operationalizing the hypothetical treatment as another baseline predictor will rarely give the correct (counterfactual) risk estimates because of confounding \citep{westreich2013table}. Counterfactual prediction can be also helpful when we want our prediction model developed in one setting to yield predictions successfully transportable to other settings with different treatment patterns. Suppose that we develop our risk prediction model in a setting where most patients have access to an effective (post-baseline) treatment. However, if we deploy our factual prediction model in a new setting in which few individuals have access to the treatment, our model is likely to fail in the sense that it may not be able to accurately identify high-risk individuals. Counterfactual prediction may allow us to achieve more robust model performance compared to factual prediction, even when model deployment influences behaviors that affect risk. \citep[see, e.g.,][for more examples]{dickerman2020counterfactual, van2020prediction, lin2021scoping}. 

However, the problem of counterfactual prediction brings challenges that do not arise in typical prediction problems because the data needed to build the predictive models are inherently not fully observable. Surprisingly, while the development of modern prediction modeling has greatly enriched the counterfactual-outcome-based causal inference particularly via semi-parametric methods \citep{kennedy2016semiparametric,kennedy2022semiparametric}, the use of causal inference to improve prediction modeling has received less attention \citep[see, e.g.,][for a discussion on the subject]{dickerman2020counterfactual, schulam2017reliable}.

In this work, we study counterfactual classification, a special case of counterfactual prediction where the outcome is discrete. Our approach allows investigators to flexibly incorporate various constraints into the models, not only to enhance their predictive performance but also to accommodate a wide range of practical constraints relevant to their classification tasks. Counterfactual classification poses both theoretical and practical challenges, as a result of the fact that in our setting, even without any constraints, the estimand is not expressible as a closed form functional unlike typical causal inference problems. We tackle this problem by framing counterfactual classification as nonlinear stochastic programming with counterfactual components. 

\subsection{Related Work}

Our work lies at the intersection of causal inference and stochastic optimization.

Counterfactual prediction is closely related to estimation of the conditional average treatment effect (CATE) in causal inference, which plays a crucial role in precision medicine and individualized policy. Let $Y^a$ denote the counterfactual outcome that would have been observed under treatment or intervention $A=a$, $A \in \{0,1\}$. The CATE for subjects with  covariate $X=x$ is defined as $\tau(x) = \E[Y^1 - Y^0 \mid X=x]$. There exists a vast literature on estimating CATE. These include some important early works assuming that $\tau(x)$ follows some known parametric form \citep[e.g.,][]{van2006statistical, robins2000marginal,  vansteelandt2014structural}. But more recently, there has been an effort to leverage flexible nonparametric machine learning methods \citep[e.g.,][]{luedtke2016super, athey2016recursive, wager2018estimation, kunzel2017meta, nie2017quasi, kennedy2020optimal, alaa2017bayesian, lu2018estimating}. A desirable property commonly held in the above CATE estimation methods is that the function $\tau(x)$ may be more structured and simple than its component main effect function $\E[Y^a \mid X=x]$.

In counterfactual prediction, however, we are fundamentally interested in predicting $Y^a$ conditional on $X=x$ under a ``single" hypothetical intervention $A=a$, as opposed to the contrast of the conditional mean outcomes under two (or more) interventions as in CATE. Counterfactual prediction is often useful to support decision-making on its own. There are settings where estimating the contrast effect or relative risk is less relevant than understanding what may happen if a subject was given a certain intervention. As mentioned previously, this is particularly the case in clinical research when predicting risk in relation to treatment started after baseline \citep{schulam2017reliable, dickerman2020counterfactual, van2020prediction, lin2021scoping}. Moreover, in the context of multi-valued treatments, it can be more useful to estimate each individual conditional mean potential outcome separately than to estimate all the possible combinations of relative effects. 

With no constraints, under appropriate identification assumptions (e.g., \ref{assumption:c1}-\ref{assumption:c3} in Section \ref{sec:setup}), counterfactual prediction is equivalent to estimating a standard regression function $\E[Y \mid X,A=a]$ so in principle one could use any regression estimator. This direct modeling or \textit{plug-in} approach has been used for counterfactual prediction in randomized controlled trials \citep[e.g.,][]{nguyen2020counterfactual, li2016predictive} or as a component of CATE estimation methods \citep[e.g.,][]{athey2016recursive,lu2018estimating}. An issue arises when we are estimating a projection of this function onto a finite-dimensional model, or where we instead want to estimate $\E[Y^a \mid V] = \E\{\E[Y \mid X,A=a] \mid V\}$ for some smaller subset $V \subset X$ (e.g., under runtime confounding \citep[][]{coston2020RuntimeConfounding}), which typically renders the plug-in approach suboptimal. Moreover, the resulting estimator fails to have double robustness, a highly desirable property which provides an additional layer of robustness against model misspecification \citep{bang2005doubly}.

On the other hand, we often want to incorporate various constraints into our predictive models. Such constraints are often used for flexible penalization \citep{james2013penalized} or supplying prior information \citep{gaines2018algorithms} to enhance model performance and interpretability. They can also be used to mitigate algorithmic biases \citep{calders2013controlling, hardt2016equality}. Further, depending on the scientific question, practitioners occasionally have some constraints which they wish to place on their prediction tasks, such as targeting specific sub-populations, restricting sign or magnitude on certain regression coefficients to be consistent with common sense, or accounting for the compositional nature of the data \citep{james2019penalized, chen2021hierarchical, lu2019generalized}. In the plug-in approach, however, it is not clear how to incorporate the given constraints into the modeling process.  

In our approach, we directly formulate and solve an optimization problem that minimizes counterfactual classification risk, where we can flexibly incorporate various forms of constraints. Optimization problems involving counterfactuals or \emph{counterfactual optimization} have not been extensively studied, with few exceptions \citep[e.g.,][]{luedtke2016optimal, mishler2021fairness, mishler2021fade, kim2022counterfactual}. Our results are closest to \cite{mishler2021fade} and \cite{kim2022counterfactual}, which study counterfactual optimization in a class of quadratic and nonlinear programming problems, respectively, yet this approach i) is not applicable to classification where the risk is defined with respect to the cross-entropy, and ii) considers only linear constraints. 

 
As in \cite{kim2022counterfactual}, we tackle the problem of counterfactual classification from the perspective of stochastic programming. The two most common approaches in stochastic programming are {stochastic approximation} (SA) and {sample average approximation} (SAA) \citep[e.g.,][]{nemirovski2009robust, shapiro2014lectures}. However, since i) we cannot compute sample moments or stochastic subgradients that involve unobserved counterfactuals, and ii) the SA and SAA approaches cannot harness efficient estimators for counterfactual components, e.g., doubly-robust or semiparametric estimators with cross-fitting \citep[][]{Chernozhukov17, newey2018cross}, more general approaches beyond the standard SA and SAA settings should be considered \citep[e.g.,][]{shapiro1991asymptotic, shapiro1993asymptotic, shapiro2000statistical} at the expense of stronger assumptions on the behavior of the optimal solution and its estimator.

\subsection{Contribution}

We study counterfactual classification as a new decision-making tool under hypothetical (contrary to fact) scenarios. Based on semiparametric theory for causal inference, we propose a doubly-robust, nonparametric estimator that can incorporate flexible constraints into the modeling process. Then we go on to analyze rates of convergence and provide a closed-form expression for the asymptotic distribution of our estimator. Our analysis shows that the proposed estimator can attain fast $\sqrt{n}$ rates even when its nuisance components are estimated using nonparametric machine learning tools at slower rates. We study the finite-sample performance of our estimator via simulation and provide a case based on real data.
Importantly, our algorithm and analysis are applicable to other problems in which the estimand is given by the solutions to a general nonlinear optimization problem whose objective function involves counterfactuals, where closed-form solutions are not available. 

\section{Problem and Setup}\label{sec:setup}

Suppose that we have access to an i.i.d. sample $(Z_{1}, ... , Z_{n})$ of $n$ tuples $Z=(Y,A,X) \sim \Pb$ for some distribution $\Pb$, binary outcome $Y \in \{0,1\}$, covariates $X \in \mathcal{X} \subset \R^{d_x}$, and binary intervention $A \in \mathcal{A}=\{0,1\}$. For simplicity, we assume $A$ and $Y$ are binary, but in principle they can be multi-valued. {We consider a general setting where only a subset of covariates $V \subseteq X$ can be used for predicting the counterfactual outcome $Y^a$. This allows for \textit{runtime confounding}, where factors used by decision-makers are recorded in the training data but are not available for prediction (see \cite{coston2020RuntimeConfounding} and references therein).} We are concerned with the following constrained optimization problem
\begin{equation}
\label{eqn:true-opt-problem}
\begin{aligned}
    & \underset{\beta \in \mathcal{B}}{\text{minimize}} \quad \mathcal{L}\left(Y^a,\sigma(\beta, \bm{b}(V))\right) \coloneqq -\E\left\{ Y^a\log \sigma(\beta, \bm{b}(V)) + (1-Y^a)\log(1-\sigma(\beta, \bm{b}(V))) \right\} \\
    & \text{subject to } \quad \beta \in \mathcal{S} \coloneqq \{\beta \mid g_j(\beta) \leq 0, j \in J \}
\end{aligned}     \tag{$\mathsf{P}$}  
\end{equation}
for some compact subset $\mathcal{B} \in \R^k$, known $C^2$-functions $g_j:\mathcal{B} \rightarrow \R$, $\sigma:\mathcal{B}\times\R^{k^\prime} \rightarrow (0,1)$, and the index set $J=\{1,...,m\}$ for the inequality constraints. 
Here, $\sigma$ is the score function and $\bm{b}(V)=[b_1(V),...,b_{k^\prime}(V)]^\top$ represents a set of basis functions for $V$ (e.g., truncated power series, kernel or spline basis functions, etc.). Note that we do not need to have $k = k^\prime$; for example, depending on the modeling techniques, it is possible to have a much larger number of model parameters than the number of basis functions, i.e., $k > k^\prime$. $\mathcal{L}\left(Y^a,\sigma(\beta, \bm{b}(V))\right)$ is our classification risk based on the cross-entropy. $\mathcal{S}$ consists of deterministic inequality constraints\footnote{Equality constraint can be always expressed by a pair of inequality constraints.} and can be used to pursue a variety of practical purposes described in Section \ref{sec:intro}. Let $\beta^*$ denote an optimal solution in \eqref{eqn:true-opt-problem}. $\beta^*$ is our optimal model parameters (coefficients) that minimize the counterfactual classification risk under the given constraints.

\textbf{Classification risk and score function.}
Our classification risk $\mathcal{L}(Y^a,\sigma(\beta, \bm{b}(V)))$ is defined by the expected cross entropy loss between $Y^a$ and $\sigma(\beta, \bm{b}(V))$. In order to estimate $\beta^*$, we first need to estimate this classification risk. Since it involves counterfactuals, the classification risk cannot be identified from observed data unless certain assumptions hold, which will be discussed shortly. The form of the score function $\sigma(\beta, \bm{b}(V))$ depends on the specific classification technique we are using. Our default choice for $\sigma$ is the sigmoid function with $k=k^\prime$, which makes the classification risk strictly convex with respect to $\beta$. It should be noted, however, that more complex and flexible classification techniques (e.g., neural networks) can also be used without affecting the subsequent results, as long as they satisfy the required regularity assumptions discussed later in Section \ref{sec:asymptotics}. Importantly, our approach is nonparametric; $\beta^*$ is the parameter of the best linear classifier with the sigmoid score in the expanded feature space spanned by $\bm{b}(V)$, but we never assume an exact `log-linear' relationship between $Y^a$ and $\bm{b}(V)$ as in ordinary logistic regression models.


\textbf{Identification.} 
To estimate the counterfactual quantity $\mathcal{L}(Y^a,\sigma(\beta, \bm{b}(V)))$ from the observed sample $(Z_{1}, ... , Z_{n})$, it must be expressed in terms of the observational data distribution $\Pb$. This can be accomplished via the following standard causal assumptions \citep[e.g.,][Chapter 12]{imbens2015causal}:
\begin{itemize}[leftmargin=*]
	\item \customlabel{assumption:c1}{(C1)} \textit{Consistency}: $Y=Y^a$ if $A=a$
	\item \customlabel{assumption:c2}{(C2)} \textit{No unmeasured confounding}: $A \ind Y^a \mid X$
	\item \customlabel{assumption:c3}{(C3)} \textit{Positivity}: $\Pb(A=a|X)> \varepsilon  \text{ a.s. for some } \varepsilon >0$
\end{itemize}

\ref{assumption:c1} - \ref{assumption:c3} will be assumed throughout this paper. Under these assumptions, our classification risk is identified as
\begin{align} \label{eqn:identified-risk}
 \mathcal{L}(\beta) = -\E\left\{ \E\left[ Y \mid X, A=a \right]\log \sigma(\beta, \bm{b}(V)) + (1-\E\left[ Y \mid X, A=a \right])\log(1-\sigma(\beta, \bm{b}(V))) \right\},
\end{align}
where we let $\mathcal{L}(\beta) \equiv \mathcal{L}(Y^a,\sigma(\beta, \bm{b}(V)))$. Since we use the sigmoid function with an equal number of model parameters as basis functions, for clarity, hereafter we write $\sigma(\beta^\top \bm{b}(V)) = \sigma(\beta, \bm{b}(V))$.
It is worth noting that even though we develop the estimator under the above set of causal assumptions, one may extend our methods to other identification strategies and settings (e.g., those of instrumental variables and mediation), since our approach is based on the analysis of a  stochastic programming problem with generic estimated objective functions (see Appendix \ref{appendix:proof}).

\textbf{Notation.} 
Here we specify the basic notation used throughout the paper. For a real-valued vector $v$, let $\Vert v \Vert_2$ denote its Euclidean or $L_2$-norm. Let $\Pn$ denote the empirical measure over $(Z_1,...,Z_n)$. Given a sample operator $h$ (e.g., an estimated function), let $\Pb$ denote the conditional expectation over a new independent observation $Z$, as in $\Pb(h)=\Pb\{h(Z)\}=\int h(z)d\Pb(z)$. Use $\Vert h \Vert_{2,\Pb}$ to denote the $L_2(\Pb)$ norm of $h$, defined by $\Vert h \Vert_{2,\Pb} = \left[\Pb (h^2) \right]^{\frac{1}{2}} = \left[\int h(z)^2 d\Pb(z)\right]^{\frac{1}{2}}$. Finally, let $\sol({P})$ denote the set of optimal solutions of an optimization program ${P}$, i.e., $\beta^* \in \sol({P})$, and define $\dist(x, S) = \inf\left\{ \Vert x - y\Vert_2: y \in S \right\}$ to denote the distance from a point $x$ to a set $S$.

\section{Estimation Algorithm} \label{sec:estimation}
Since \eqref{eqn:true-opt-problem} is not directly solvable, we need to find an approximating program of the ``true" program \eqref{eqn:true-opt-problem}. To this end, we shall first discuss the problem of obtaining estimates for the identified classification risk \eqref{eqn:identified-risk}. To simplify notation, we first introduce the following nuisance functions
\begin{align*}
    & \pi_a(X)=\Pb[A=a \mid X], \\
    & \mu_a(X) = \E[Y \mid X,A=a],
\end{align*}
and let $\widehat{\pi}_a$ and $ \widehat{\mu}_a$ be their corresponding estimators. ${\pi}_a$ and ${\mu}_a$ are referred to as the propensity score and outcome regression function, respectively.

A natural estimator for \eqref{eqn:identified-risk} is given by
\begin{align} \label{eqn:plug-in-risk}
 \widehat{\mathcal{L}}(\beta) = -\Pn\left\{ \widehat{\mu}_a(X)\log \sigma(\beta^\top \bm{b}(V)) + (1-\widehat{\mu}_a(X))\log(1-\sigma(\beta^\top \bm{b}(V))) \right\},
\end{align}
where we simply plug in the regression estimates $\widehat{\mu}_a$ into the empirical average of \eqref{eqn:identified-risk}. Here, we construct a more efficient estimator based on the semiparametric approach in causal inference \citep{kennedy2017semiparametric, kennedy2022semiparametric}. Let
\begin{align*}
    & \varphi_a(Z;\eta) = \frac{\mathbbm{1}(A=a)}{\pi_a(X)}\left\{Y- \mu_A(X)\right\} + \mu_a(X), 
\end{align*}
denote the uncentered efficient influence function for the parameter $\E\left\{\E[Y \mid X,A=a ]\right\}$, where nuisance functions are defined by $\eta=\{\pi_a(X), \mu_a(X) \}$. Then it can be deduced that for an arbitrary fixed real-valued function $h:\mathcal{X} \rightarrow \R$, the uncentered efficient influence function for the parameter $\psi_a \coloneqq \E\left\{\E[Y \mid X,A=a ]h(X)\right\}$ is given by $\varphi_a(Z;\eta)h(X)$ (Lemma \ref{lem:double-robust} in the appendix). 

Now we provide an influence-function-based semiparametric estimator for $\psi_a$. Following \citep{zheng2010asymptotic, Chernozhukov17, robins2008higher, kennedy2020optimal}, we propose to use \textit{sample splitting} to allow for arbitrarily complex nuisance estimators $\widehat{\eta}$. Specifically, we split the data into $K$ disjoint groups, each with size of $n/K$ approximately, by drawing variables $(B_1,..., B_n)$ independent of the data, with $B_i=b$ indicating that subject $i$ was split into group $b \in \{1,...,K\}$. Then the semiparametric estimator for $\psi_a$ based on the efficient influence function and sample splitting is given by
\begin{align} \label{eqn:risk-component-estimator}
    \widehat{\psi}_a = \frac{1}{K}\sum_{b=1}^K \Pn^b\left\{\varphi_a(Z;\widehat{\eta}_{-b})h(X)\right\} \equiv \Pn\left\{ \varphi_a(Z;\widehat{\eta}_{-B_K}) h(X) \right\},
\end{align}
where we let $\Pn^b$ denote empirical averages over the set of units $\{i : B_i=b\}$ in the group $b$ and let $\widehat{\eta}_{-b}$ denote the nuisance estimator constructed only using those units $\{i : B_i \neq b\}$. Under weak regularity conditions, this semiparametric estimator attains the efficiency bound with the double robustness property, and allows us to employ nonparametric machine learning methods while achieving the $\sqrt{n}$-rate of convergence and valid inference under weak conditions (see Lemma \ref{lem:double-robust} in the appendix for the formal statement). If one is willing to rely on appropriate empirical process conditions (e.g., Donsker-type or low entropy conditions \citep{van2000asymptotic}), then $\eta$ can be estimated on the same sample without sample splitting. However, this would limit the flexibility of the nuisance estimators.

The classification risk $\mathcal{L}(\beta)$ is a sum of two functionals, each of which is in the form of $\psi_a$, 
Thus, for each $\beta$, we propose to estimate the classification risk using \eqref{eqn:risk-component-estimator} as follows
\begin{align} \label{eqn:risk-estimator}
   \widehat{\mathcal{L}}(\beta) = -\Pn\left\{ \varphi_a(Z;\widehat{\eta}_{-B_K}) \log \sigma(\beta^\top \bm{b}(V)) + (1-\varphi_a(Z;\widehat{\eta}_{-B_K}))\log(1-\sigma(\beta^\top \bm{b}(V))) \right\}.
\end{align}

Now that we have proposed the efficient method to estimate the counterfactual component $\mathcal{L}(\beta)$, in what follows we provide an approximating program for \eqref{eqn:true-opt-problem} which we aim to actually solve by substituting $\widehat{\mathcal{L}}(\beta)$ for $\mathcal{L}(\beta)$
\begin{equation}
\label{eqn:approx-opt-problem}
\begin{aligned}
    & \underset{\beta \in \mathcal{B}}{\text{minimize}} \quad \widehat{\mathcal{L}}(\beta) \\
    & \text{subject to } \quad \beta \in \mathcal{S}.
\end{aligned}     \tag{$\widehat{\mathsf{P}}$}   
\end{equation}

Let $\widehat{\beta} \in \sol(\widehat{\mathsf{P}})$. Then $\widehat{\beta}$ is our estimator for $\beta^*$. We summarize our algorithm detailing how to compute the estimator $\widehat{\beta}$ in Algorithm \ref{algo:DR-estimator}.

\eqref{eqn:approx-opt-problem} is a smooth nonlinear optimization problem whose objective function depends on data. Unfortunately, unlike \eqref{eqn:true-opt-problem}, \eqref{eqn:approx-opt-problem} is not guaranteed to be convex in finite samples even if $\mathcal{S}$ is convex. Non-convex problems are usually more difficult than convex ones due to high variance and slow computing time. Nonetheless, substantial progress has been made recently \citep{rapcsak2013smooth, boumal2020introduction}, and a number of efficient global optimization algorithms are available in open-source libraries (e.g., \textsc{NLopt}). Also in order for more flexible implementation, one may adapt neural networks for our approach without the need for specifying $\sigma$ and $\bm{b}$; we discuss this in more detail in Section \ref{sec:discussion} as a promising future direction.


%
\begin{algorithm}[t]
\caption{Doubly robust estimator for counterfactual classification}
\label{algo:DR-estimator}
\textbf{input:} $\bm{b}(\cdot), K$ \\
Draw $(B_1,...,B_n)$ with $B_i \in \{1,...,K\}$ \\
\For{$b=1,...,K$}{
Let $D_0 = \{Z_i : B_i \neq b\}$ and $D_1 = \{Z_i : B_i = b\}$ \\
Obtain $\widehat{\eta}_{-b}$ by constructing $\widehat{\pi}_a, \widehat{\mu}_a$ on $D_0$\\
$M_{1,b}(\beta) \leftarrow$ empirical average of $\varphi_a(Z;\widehat{\eta}_{-b})\log \sigma(\beta^\top \bm{b}(V))$ over $D_1$\\
$M_{0,b}(\beta) \leftarrow$ empirical average of  $\left( 1 - \varphi_a(Z;\widehat{\eta}_{-b}) \right) \log(1-\sigma(\beta^\top \bm{b}(V)))$ over $D_1$\\
}
$\widehat{\mathcal{L}}(\beta) \leftarrow \sum_{b=1}^K\left\{\frac{1}{n}\sum_{i=1}^n \mathbbm{1}(B_i=b) \right\} \left(M_{1,b}(\beta) + M_{0,b}(\beta) \right)$ \\
\textbf{solve} \eqref{eqn:approx-opt-problem} with $\widehat{\mathcal{L}}(\beta)$
\end{algorithm}

\section{Asymptotic Analysis} \label{sec:asymptotics}

This section is devoted to analyzing the rates of convergence and asymptotic distribution for the estimated optimal solution $\widehat{\beta}$. Unlike stochastic optimization, analysis of the statistical properties of optimal solutions to a general counterfactual optimization problem appears much more sparse. In what was perhaps the first study of the problem, \cite{kim2022counterfactual} analyzed asymptotic behavior of optimal solutions for a particular class of nonlinear counterfactual optimization problems that can be cast into a parametric program with finite-dimensional stochastic parameters. However, the true program \eqref{eqn:true-opt-problem} does not belong to the class to which their analysis is applicable. Here, we derive the asymptotic properties of $\widehat{\beta}$ by considering similar assumptions as in \cite{kim2022counterfactual}.

We first introduce the following assumptions for our counterfactual component estimator $\widehat{\mathcal{L}}$. 

\begin{enumerate}[label={}, leftmargin=*]
	    \item \customlabel{assumption:A1}{(A1)}  $\Pb(\widehat{\pi}_a \in [\epsilon, 1-\epsilon]) = 1$ for some $\epsilon > 0$ 
	    \item \customlabel{assumption:A2}{(A2)} $\Vert \widehat{\mu}_{a}- \mu_{a} \Vert_{2,\Pb} = o_\Pb(1)$ or $\Vert \widehat{\pi}_{a}- \pi_{a} \Vert_{2,\Pb} = o_\Pb(1)$
	    \item \customlabel{assumption:A3}{(A3)}  
	    $   \begin{aligned}[t]
            &\Vert \widehat{\pi}_{a} - {\pi}_{a} \Vert_{2,\Pb} \Vert \widehat{\mu}_{a} - \mu_{a} \Vert_{2,\Pb} = o_\Pb(n^{-\frac{1}{2}})
            \end{aligned} 
        $
\end{enumerate}	  
Assumptions \ref{assumption:A1} - \ref{assumption:A3} are commonly used in semiparametric estimation in the causal inference literature \citep[][]{kennedy2016semiparametric}. 
Next, for a feasible point $\bar{\beta} \in \mathcal{S}$ we define the active index set.

\begin{definition}[Active set]
For $\bar{\beta} \in \mathcal{S}$, we define the active index set $J_0$ by
\[
J_0(\bar{\beta}) = \{1\leq j \leq m \mid g_j(\bar{\beta}) = 0 \}.
\]
\end{definition}

Then we introduce the following technical condition on $g_j$.
\begin{enumerate}[label={}, leftmargin=*]
	    \item \customlabel{assumption:B1}{(B1)} For each $\beta^* \in \sol(\mathsf{P})$, 
	    \[
	        d^\top\nabla_\beta^2g_j(\beta^*)d \geq 0 \quad \forall d \in \{d \mid \nabla_\beta g_j(\beta^*) = 0, j \in J_0(\bar{\beta}) \}.
	    \]
\end{enumerate}	 

Assumption \ref{assumption:B1} holds, for example, if each $g_j$ is locally convex around $\beta^*$. In what follows, based on the result of \cite{shapiro1991asymptotic}, we characterize the rates of convergence for $\widehat{\beta}$ in terms of the nuisance estimation error under relatively weak conditions.

\begin{theorem}[Rate of Convergence] \label{thm:convergence-rates}
Assume that \ref{assumption:A1}, \ref{assumption:A2}, and \ref{assumption:B1}, hold. Then
\begin{align*}
    \dist\left(\widehat{\beta} , \sol(\text{\ref{eqn:true-opt-problem}})\right) &= O_\Pb\left(\Vert \widehat{\pi}_a - \pi_a \Vert_{2,\Pb} \Vert \widehat{\mu}_{a}- \mu_{a} \Vert_{2,\Pb} + n^{-\frac{1}{2}} \right).
\end{align*}
Hence, if we further assume the nonparametric condition \ref{assumption:A3}, we obtain
\begin{align*}
    \dist\left(\widehat{\beta} , \sol(\text{\ref{eqn:true-opt-problem}})\right) &= O_\Pb\left(n^{-\frac{1}{2}} \right).
\end{align*}
\end{theorem}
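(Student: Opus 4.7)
The plan is to reduce the theorem to two ingredients: (a) a uniform rate for the estimated risk $\widehat{\mathcal{L}}$ and its gradient, driven by the semiparametric doubly-robust structure of \eqref{eqn:risk-estimator}; and (b) a deterministic sensitivity result from parametric programming that converts an objective/gradient perturbation into a perturbation of the optimal set, as in \cite{shapiro1991asymptotic}. The causal-inference content sits entirely in (a); the role of \ref{assumption:B1} and the strict convexity of the sigmoid cross-entropy is to make (b) available.

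First I would establish the uniform expansion
\begin{align*}
\sup_{\beta \in \mathcal{B}} \bigl| \widehat{\mathcal{L}}(\beta) - \mathcal{L}(\beta) \bigr| \;+\; \sup_{\beta \in \mathcal{B}} \bigl\Vert \nabla_\beta \widehat{\mathcal{L}}(\beta) - \nabla_\beta \mathcal{L}(\beta) \bigr\Vert_2 \;=\; O_\Pb\!\left( \Vert \widehat{\pi}_a - \pi_a \Vert_{2,\Pb} \Vert \widehat{\mu}_a - \mu_a \Vert_{2,\Pb} + n^{-1/2} \right).
\end{align*}
For each fixed $\beta$ the objective is a sum of two functionals of the form $\psi_a(\beta) = \E\{\mu_a(X) h(X;\beta)\}$ with $h$ equal to either $\log\sigma(\beta^\top \bm{b}(V))$ or $\log(1-\sigma(\beta^\top \bm{b}(V)))$. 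The cited Lemma \ref{lem:double-robust} yields the standard doubly robust decomposition
\begin{align*}
\widehat{\mathcal{L}}(\beta) - \mathcal{L}(\beta) \;=\; (\Pn - \Pb)\!\left\{\varphi_a(Z;\eta)\log\sigma(\beta^\top \bm{b}(V)) + (1-\varphi_a(Z;\eta))\log(1-\sigma(\beta^\top \bm{b}(V)))\right\} + R_n(\beta),
\end{align*}
with $\sup_{\beta} |R_n(\beta)| = O_\Pb( \Vert \widehat{\pi}_a - \pi_a \Vert_{2,\Pb} \Vert \widehat{\mu}_a - \mu_a \Vert_{2,\Pb})$, using \ref{assumption:A1} to bound $1/\widehat{\pi}_a$. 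The centered empirical-process term is $O_\Pb(n^{-1/2})$ pointwise by sample splitting, with no Donsker requirement on $\widehat{\eta}$; uniformity over the compact set $\mathcal{B}$ follows from Lipschitz continuity of $\log\sigma$ (and its gradient $(1-\sigma)\bm{b}(V)$) in $\beta$ together with boundedness of $\bm{b}(V)$, via a standard finite-dimensional covering argument. Applying the same argument to the $\beta$-derivative, which can be taken inside the expectation because the integrand is dominated uniformly on $\mathcal{B}$, gives the gradient bound.

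Second, I would invoke Shapiro's stability theorem. The input hypotheses are (i) a second-order sufficient condition at each $\beta^*\in\sol(\mathsf{P})$ and (ii) a constraint-regularity/curvature condition on the active constraints. The strict convexity of the cross-entropy in $\sigma(\beta^\top \bm{b}(V))$, noted in the paper, yields a positive semidefinite Hessian for $\mathcal{L}$, which together with \ref{assumption:B1} supplies the second-order growth along feasible directions at active constraints, i.e., a quadratic growth condition on the true problem. Consistency $\dist(\widehat{\beta},\sol(\mathsf{P})) = o_\Pb(1)$ follows from \ref{assumption:A2} combined with the uniform convergence above via classical argmin theory, localising the analysis to a neighbourhood of $\sol(\mathsf{P})$ in which the stability theorem gives
\begin{align*}
\dist\bigl(\widehat{\beta},\sol(\mathsf{P})\bigr) \;=\; O_\Pb\!\Bigl( \sup_{\beta \in N(\sol(\mathsf{P}))} \bigl\Vert \nabla_\beta \widehat{\mathcal{L}}(\beta) - \nabla_\beta \mathcal{L}(\beta) \bigr\Vert_2 \Bigr).
\end{align*}
Plugging in the gradient rate from the first step gives the first claim; the second claim is then immediate under \ref{assumption:A3}.

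The main obstacle is bridging the semiparametric perturbation analysis with the optimization-sensitivity machinery in a way that is uniform in $\beta$ and that handles possibly non-singleton $\sol(\mathsf{P})$. The Shapiro framework requires control on the gradient perturbation, not merely the objective perturbation, so one must check that the doubly-robust product-bias structure survives differentiation in $\beta$. This passes cleanly because $\beta$ enters the integrand only through the deterministic smooth functions $\log\sigma$ and $\log(1-\sigma)$, so differentiation commutes with the efficient-influence-function construction and leaves the second-order product remainder intact. Non-uniqueness of $\sol(\mathsf{P})$ is accommodated by phrasing the conclusion through $\dist(\cdot,\sol(\mathsf{P}))$ and verifying that the second-order growth condition induced by \ref{assumption:B1} holds uniformly over the solution set.
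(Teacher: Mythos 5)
Your proposal follows essentially the same route as the paper: bound the gradient perturbation $\sup_{\beta}\Vert\nabla_\beta\widehat{\mathcal{L}}(\beta)-\nabla_\beta\mathcal{L}(\beta)\Vert_2$ by the doubly-robust second-order product term plus an $n^{-1/2}$ empirical-process term (Lemma \ref{lem:double-robust}), then convert this into a bound on $\dist(\widehat{\beta},\sol(\text{\ref{eqn:true-opt-problem}}))$ via quadratic growth and the stability result of \cite{shapiro1991asymptotic} (Lemma \ref{lem:appendix-1}). The one small imprecision is that quadratic growth requires the Hessian of $\mathcal{L}$ to be positive \emph{definite} together with \ref{assumption:B1} (which the paper obtains from strict convexity of the sigmoid cross-entropy), not merely positive semidefinite as you write; otherwise your argument matches the paper's, and it actually spells out the uniformity-in-$\beta$ covering step more explicitly than the paper does.
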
 

Theorem \ref{thm:convergence-rates} indicates that double robustness is possible for our estimator, and thereby $\sqrt{n}$ rates are attainable even when each of the nuisance regression functions is estimated flexibly at much slower rates (e.g., $n^{-1/4}$ rates for each), with a wide variety of modern nonparametric tools.
Since $\mathcal{L}$ is continuously differentiable with bounded derivative, the consistency of the optimal value naturally follows by the result of Theorem \ref{thm:convergence-rates} and the continuous mapping theorem. More specifically, in the following corollary, we show that the same rates are attained for the optimal value under identical conditions.

\begin{corollary}[Rate of Convergence for Optimal Value]
\label{cor:convergence-rates-optval}
Suppose \ref{assumption:A1}, \ref{assumption:A2}, \ref{assumption:A3}, \ref{assumption:B1} hold and let $v^*$ and $\widehat{v}$ be the optimal values corresponding to $\beta^* \in \sol(\text{\ref{eqn:true-opt-problem}})$ and $\widehat{\beta}$, respectively. Then we have $\left\vert \widehat{v} - v^* \right\vert= O_\Pb\left(\Vert \widehat{\pi}_a - \pi_a \Vert_{2,\Pb} \Vert \widehat{\mu}_{a}- \mu_{a} \Vert_{2,\Pb} + n^{-\frac{1}{2}} \right)$.
\end{corollary}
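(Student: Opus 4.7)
The plan is to split the optimal-value gap via the triangle inequality
\[
|\widehat v - v^*| = |\widehat{\mathcal{L}}(\widehat\beta) - \mathcal{L}(\beta^*)| \leq |\widehat{\mathcal{L}}(\widehat\beta) - \mathcal{L}(\widehat\beta)| + |\mathcal{L}(\widehat\beta) - \mathcal{L}(\beta^*)|,
\]
where I take $\beta^* \in \sol(\text{\ref{eqn:true-opt-problem}})$ to be a nearest point of this compact set to $\widehat\beta$, so that $\mathcal{L}(\beta^*) = v^*$ and $\Vert\widehat\beta - \beta^*\Vert_2 = \dist(\widehat\beta, \sol(\text{\ref{eqn:true-opt-problem}}))$. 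The goal is to show each of the two pieces is $O_\Pb(\Vert\widehat\pi_a - \pi_a\Vert_{2,\Pb}\Vert\widehat\mu_a - \mu_a\Vert_{2,\Pb} + n^{-1/2})$, which under \ref{assumption:A3} collapses to the advertised $O_\Pb(n^{-1/2})$ rate but already yields the general nonparametric-rate statement.

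The second piece is immediate: since $\sigma$ is bounded away from $0$ and $1$ on the compact set $\mathcal{B}$ and $\bm{b}(V)$ has a square-integrable envelope, $\nabla_\beta \mathcal{L}$ is uniformly bounded on $\mathcal{B}$, so $\mathcal{L}$ is globally Lipschitz there with some constant $L$; Theorem \ref{thm:convergence-rates} then gives $|\mathcal{L}(\widehat\beta) - \mathcal{L}(\beta^*)| \leq L\,\dist(\widehat\beta,\sol(\text{\ref{eqn:true-opt-problem}})) = O_\Pb(n^{-1/2})$. For the first piece, I would apply Lemma \ref{lem:double-robust} with $h(X) = \log\sigma(\beta^\top \bm{b}(V)) - \log(1-\sigma(\beta^\top\bm{b}(V)))$ to obtain, for each fixed $\beta$, the doubly-robust expansion $\widehat{\mathcal{L}}(\beta) - \mathcal{L}(\beta) = (\Pn - \Pb)\{\phi_\beta(Z;\widehat\eta)\} + R_n(\beta)$, where $|R_n(\beta)| \lesssim \Vert h\Vert_\infty\, \Vert\widehat\pi_a - \pi_a\Vert_{2,\Pb}\Vert\widehat\mu_a - \mu_a\Vert_{2,\Pb}$ with $\Vert h\Vert_\infty$ bounded uniformly in $\beta \in \mathcal{B}$. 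The empirical-process term is then promoted to a uniform bound by exploiting smoothness and boundedness of $\beta \mapsto \log\sigma(\beta^\top\bm{b}(V))$ on the compact set $\mathcal{B}$ together with cross-fitting, yielding $\sup_{\beta\in\mathcal{B}}|(\Pn - \Pb)\phi_\beta| = O_\Pb(n^{-1/2})$; evaluating this uniform bound at $\widehat\beta$ closes the argument.

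The main obstacle is precisely this uniformization step: because $\widehat\beta$ is a data-dependent minimizer, a pointwise doubly-robust expansion of $\widehat{\mathcal{L}}(\beta) - \mathcal{L}(\beta)$ is not enough, and one must either establish uniform control of the estimation error over $\mathcal{B}$ or combine one-sided optimality (e.g., $\widehat v - v^* \le \widehat{\mathcal{L}}(\beta^*) - \mathcal{L}(\beta^*)$, a pointwise evaluation at fixed $\beta^*$) with the Lipschitz step above to handle the opposite direction. Compactness of $\mathcal{B}$ and smoothness of the sigmoid/basis composition make this uniformization routine but not wholly trivial to state; it is the only piece not directly handed to us by Theorem \ref{thm:convergence-rates} and the bounded-derivative hypothesis on $\mathcal{L}$, and it is exactly what the ``continuous mapping theorem'' remark following the corollary statement is implicitly invoking.
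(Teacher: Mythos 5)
Your proposal is correct, and it is in fact more complete than what the paper itself records: the paper offers no displayed proof of this corollary, only the remark that "since $\mathcal{L}$ is continuously differentiable with bounded derivative, the consistency of the optimal value naturally follows by the result of Theorem \ref{thm:convergence-rates} and the continuous mapping theorem." That one-liner is exactly your second piece, $\left\vert \mathcal{L}(\widehat\beta) - \mathcal{L}(\beta^*)\right\vert \leq L\,\dist\bigl(\widehat\beta,\sol(\text{\ref{eqn:true-opt-problem}})\bigr)$, and it fully proves the corollary only under the reading $\widehat v = \mathcal{L}(\widehat\beta)$ (the true risk at the estimated solution). Under the other natural reading, $\widehat v = \widehat{\mathcal{L}}(\widehat\beta)$ (the optimal value of \ref{eqn:approx-opt-problem}, which is what the simulations report), your first piece $\left\vert\widehat{\mathcal{L}}(\widehat\beta) - \mathcal{L}(\widehat\beta)\right\vert$ is genuinely needed and the paper's remark is silent on it; your treatment via Lemma \ref{lem:double-robust} plus a uniform-in-$\beta$ empirical-process bound is the right way to close it, and the uniformization you flag is indeed routine here (for fixed $\widehat\eta$ on the held-out folds, $\beta \mapsto \varphi_a(Z;\widehat\eta)\log\sigma(\beta^\top\bm{b}(V)) + \{1-\varphi_a(Z;\widehat\eta)\}\log(1-\sigma(\beta^\top\bm{b}(V)))$ is Lipschitz in $\beta$ over the compact $\mathcal{B}$ with a bounded envelope, hence a Donsker class). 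Your one-sided optimality observation $\widehat v - v^* \leq \widehat{\mathcal{L}}(\beta^*) - \mathcal{L}(\beta^*)$ is a nice way to reduce one direction to a pointwise evaluation, though note the reverse direction still requires either the uniform bound or an evaluation at the data-dependent $\widehat\beta$, so it does not eliminate the uniformization entirely. Two cosmetic points: all points of $\sol(\text{\ref{eqn:true-opt-problem}})$ share the value $v^*$, so your choice of nearest $\beta^*$ is legitimate; and for $\log\sigma$ to be bounded you want a bounded (not merely square-integrable) envelope for $\bm{b}(V)$, an implicit regularity condition the paper also relies on.
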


In order to conduct statistical inference, it is also desirable to characterize the asymptotic distribution of $\widehat{\beta}$. This requires stronger assumptions and a more specialized analysis \citep{shapiro1991asymptotic}. Asymptotic properties of optimal solutions in stochastic programming are typically studied based on the generalization of the delta method for directionally differentiable mappings \citep[e.g.,][]{shapiro1993asymptotic, shapiro2000statistical,shapiro2014lectures}. Asymptotic normality is of particular interest since without asymptotic normality, consistency of the bootstrap is no longer guaranteed for the solution estimators \citep{fang2019inference}.

We start with additional definitions of some popular regularity conditions with respect to  \eqref{eqn:true-opt-problem}.

\begin{definition}[LICQ]
\textit{Linear independence constraint qualification} (LICQ) is satisfied at $\bar{\beta} \in \mathcal{S}$ if the vectors $\nabla_\beta g_j(\bar{\beta})$, $j \in J_0(\bar{\beta})$ are linearly independent. 
\end{definition}

\begin{definition}[SC]
Let $L({\beta},{\gamma})$ be the Lagrangian.
\textit{Strict Complementarity} (SC) is satisfied at $\bar{\beta} \in \mathcal{S}$ if, with multipliers $\bar{\gamma}_j \geq 0$, $j \in J_0(\bar{\beta})$, the Karush-Kuhn-Tucker (KKT) condition 
\begin{align*}
    \nabla_\beta L(\bar{\beta},\bar{\gamma}) \coloneqq  \nabla_\beta \mathcal{L}(\bar{\beta}) + \underset{j \in J_0(\bar{\beta})}{\sum}\bar{\gamma}_j  \nabla_\beta g_j(\bar{\beta}) = 0,
\end{align*}
is satisfied such that
$
    \bar{\gamma}_j > 0, \forall j \in J_0(\bar{\beta}).
$    
\end{definition}
LICQ is arguably one of the most widely-used constraint qualifications that admit the first-order necessary conditions.
SC means that if the $j$-th inequality constraint is active, then the corresponding dual variable is strictly positive, so exactly one of them is zero for each $1 \leq j \leq m$. SC is widely used in the optimization literature, particularly in the context of parametric optimization \citep[e.g.,][]{still2018lectures, shapiro2014lectures}. We further require uniqueness of the optimal solution in \eqref{eqn:true-opt-problem}.
\begin{enumerate}[label={}, leftmargin=*]
	    \item \customlabel{assumption:B2}{(B2)} Program \eqref{eqn:true-opt-problem} has a unique optimal solution $\beta^*$ (i.e., $\sol(\text{\ref{eqn:true-opt-problem}})\equiv\{\beta^*\}$ is singleton).
\end{enumerate}	 

Note that under \ref{assumption:B2} if LICQ holds at $\beta^*$, then the corresponding multipliers are determined uniquely \citep{wachsmuth2013licq}.
In the next theorem, we provide a closed-form expression for the asymptotic distribution of $\widehat{\beta}$.

\begin{theorem}[Asymptotic Distribution] \label{thm:asymptotics}
Assume that \ref{assumption:A1} - \ref{assumption:A3}, \ref{assumption:B1}, and \ref{assumption:B2} hold, and that LICQ and SC hold at $\beta^*$ with the corresponding multipliers $\gamma^*$. Then
\[
n^{-\frac{1}{2}} \left(\widehat{\beta} - \beta^* \right) = \begin{bmatrix}
        \nabla^2_\beta L(\beta^*, \gamma^*) & \mathsf{B} \\
        \mathsf{B}^\top & 0
        \end{bmatrix}^{-1} \begin{bmatrix}
        \bm{1} \\
        \bm{0}
        \end{bmatrix}^\top
        \Upsilon + o_\Pb(1)
\]
for some $k \times \vert J_0(\beta^*) \vert$ matrix $\mathsf{B}$ and random variable $\Upsilon$ such that
\begin{align*}
    \Upsilon \xrightarrow{d} N\left(0,\var\left(\varphi_a(Z;\eta)h_1(V,\beta^*)+\{1-\varphi_a(Z;\eta)\}h_0(V,\beta^*)\right)\right),
\end{align*}
where
\begin{align*}
    \mathsf{B} & = \left[\nabla_\beta g_j(\beta^*)^\top, \, j \in J_0(\beta^*) \right], \\
    h_1(V,\beta) & = \frac{1}{\log \sigma(\beta^\top\bm{b}(V))}\bm{b}(V)\sigma(\beta^\top\bm{b}(V))\{1 - \sigma(\beta^\top\bm{b}(V))\}, \\
    h_0(V,\beta) & = -\frac{1}{\log(1- \sigma(\beta^\top\bm{b}(V)))}\bm{b}(V)\sigma(\beta^\top\bm{b}(V))\{1 - \sigma(\beta^\top\bm{b}(V))\}.
\end{align*}
\end{theorem}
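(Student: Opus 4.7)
The plan is to linearize the perturbed Karush--Kuhn--Tucker (KKT) system for \eqref{eqn:approx-opt-problem} around $(\beta^*,\gamma^*)$ and then invoke an influence-function argument for the gradient discrepancy that drives the perturbation. First, under SC and LICQ at $\beta^*$, together with the consistency $\dist(\widehat\beta,\sol(\text{\ref{eqn:true-opt-problem}})) = O_\Pb(n^{-1/2})$ supplied by Theorem~\ref{thm:convergence-rates} under \ref{assumption:A3} and the $C^2$ regularity of the $g_j$, the active index set of \eqref{eqn:approx-opt-problem} coincides with $J_0(\beta^*)$ with probability tending to one: inactive constraints stay strictly inactive (so their multipliers vanish), while strict positivity of $\gamma^*_j$ for $j\in J_0(\beta^*)$ forces the corresponding estimated multipliers $\widehat\gamma_j$ to remain strictly positive. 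Combined with \ref{assumption:B2} and LICQ, this pins $\widehat\gamma$ down uniquely and yields $\widehat\gamma \xrightarrow{\Pb} \gamma^*$.

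Second, I would Taylor-expand the stationarity and primal-feasibility equations at $(\widehat\beta,\widehat\gamma)$ about $(\beta^*,\gamma^*)$. Using $\nabla_\beta \mathcal{L}(\beta^*) + \mathsf{B}\gamma^* = 0$ and $g_j(\beta^*)=0$ for $j \in J_0(\beta^*)$, together with uniform convergence of $\nabla_\beta^2 \widehat{\mathcal L}$ to $\nabla_\beta^2 \mathcal L$ on a neighborhood of $\beta^*$, the first-order conditions collapse to
\[
\begin{bmatrix}
\nabla_\beta^2 L(\beta^*,\gamma^*) & \mathsf{B} \\
\mathsf{B}^\top & 0
\end{bmatrix}
\begin{bmatrix}
\widehat\beta - \beta^* \\ \widehat\gamma - \gamma^*
\end{bmatrix}
= -\begin{bmatrix}
\nabla_\beta \widehat{\mathcal L}(\beta^*) - \nabla_\beta \mathcal L(\beta^*) \\ \bm 0
\end{bmatrix}
+ o_\Pb\!\left(\Vert \widehat\beta - \beta^*\Vert_2 + n^{-1/2}\right).
\]
The bordered Hessian on the left is nonsingular: LICQ supplies full column rank of $\mathsf{B}$, and strict convexity of $\mathcal L$ in $\beta$ (via the sigmoid score) together with \ref{assumption:B1} yields the second-order sufficient condition that $\nabla_\beta^2 L(\beta^*,\gamma^*)$ is positive definite on the null space of $\mathsf{B}^\top$. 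Inverting and reading off the first $k$ coordinates recovers the closed-form matrix expression displayed in the theorem.

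Third, to identify the limit of the right-hand side I would differentiate $\widehat{\mathcal L}$ under the empirical average, which expresses $\nabla_\beta \widehat{\mathcal L}(\beta)$ as a sample mean of a function of $\varphi_a(Z;\widehat\eta)$ and the stated $h_0, h_1$ (each built from $\sigma$, $\nabla \log\sigma$, and $\nabla\log(1-\sigma)$). A standard efficient-influence-function decomposition gives
\[
\nabla_\beta \widehat{\mathcal L}(\beta^*) - \nabla_\beta \mathcal L(\beta^*) = (\Pn - \Pb)\!\left\{\varphi_a(Z;\eta) h_1(V,\beta^*) + [1 - \varphi_a(Z;\eta)] h_0(V,\beta^*)\right\} + R_n,
\]
where the sample-splitting construction in \eqref{eqn:risk-component-estimator} handles the empirical-process term without any Donsker condition, while the product-rate assumption \ref{assumption:A3} forces the second-order, ``product-bias'' remainder $R_n$ to be $o_\Pb(n^{-1/2})$ (this is the same double-robustness argument used in Lemma~\ref{lem:double-robust} applied pointwise in $\beta$ at $\beta^*$). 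The ordinary multivariate CLT then yields the claimed normal limit for $\Upsilon$, and substitution into the inverted KKT system finishes the proof.

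The main obstacle is justifying the first-order linearization at the $\sqrt n$ scale, which requires (i) stability of the active set and strict positivity of $\widehat\gamma$ on $J_0(\beta^*)$, both delivered by SC, and (ii) uniform control of the second-order Taylor remainder of $\nabla_\beta \widehat{\mathcal L}$ in a shrinking neighborhood of $\beta^*$. For (ii), compactness of $\mathcal B$, boundedness of $\sigma,\log\sigma,\log(1-\sigma)$ and their derivatives on this neighborhood, and the nuisance consistency rates \ref{assumption:A2}--\ref{assumption:A3} combine to bound $\nabla_\beta^2 \widehat{\mathcal L} - \nabla_\beta^2 \mathcal L$ uniformly there. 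This is precisely the regime in which the Shapiro (1993, 2000) delta-method for directionally differentiable optimal-solution mappings applies cleanly, and we invoke it to assemble the pieces.
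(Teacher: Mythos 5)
Your proposal is correct in substance and lands on exactly the same objects as the paper: the inverse bordered KKT matrix applied to the limiting gradient perturbation $\Upsilon$, with $\Upsilon$ identified through the efficient influence function $\varphi_a(Z;\eta)h_1+\{1-\varphi_a(Z;\eta)\}h_0$ and the product-rate condition \ref{assumption:A3} killing the second-order remainder (this is precisely the role of Lemma \ref{lem:double-robust} in the paper). The organizational route differs, though. You linearize the \emph{sample} KKT system of \eqref{eqn:approx-opt-problem} directly around $(\beta^*,\gamma^*)$, which obliges you to first establish active-set stability and consistency of the estimated multipliers $\widehat\gamma$ (your step (i)) and to control $\nabla_\beta^2\widehat{\mathcal L}-\nabla_\beta^2\mathcal L$ uniformly in a shrinking neighborhood (your step (ii)). The paper instead introduces an auxiliary parametric program $\mathsf{P}_\xi$ with linearly perturbed objective $f(x)+x^\top\xi$, invokes Shapiro's (1993) Theorem 3.1 to write $\widehat\beta=\bar x(\xi)+o_\Pb(n^{-1/2})$ with $\xi=\nabla\widehat f(\beta^*)-\nabla f(\beta^*)$, and then applies the classical implicit function theorem to the stacked KKT map $H(x,\xi,\gamma)=\bigl(\nabla_x f+\sum_j\gamma_j\nabla_x g_j+\xi,\ \diag(\gamma)g(x)\bigr)$ to show $\bar x(\cdot)$ is totally (hence linearly) differentiable at $\xi=0$ with derivative given by the same bordered matrix. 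The paper's route buys you the active-set and multiplier stability for free from the IFT/local-stability citations and cleanly separates the stochastic analysis (all of it lives in $\xi$) from the deterministic sensitivity analysis; your route is more self-contained and elementary but requires you to actually prove the active-set identification and the uniform Hessian convergence you assert, neither of which is entirely free (the former needs that $\widehat\beta$ satisfies KKT for $\widehat{\mathsf{P}}$ with well-defined multipliers, which follows from LICQ persisting in a neighborhood of $\beta^*$). Either way the conclusion and the variance formula are the same.
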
 

The above theorem gives explicit conditions under which $\widehat{\beta}$ is $\sqrt{n}$-consistent and asymptotically normal. We harness the classical results of \cite{shapiro1993asymptotic} that use an expansion of $\widehat{\beta}$ in terms of an auxiliary parametric program. To show asymptotic normality of $\widehat{\beta}$, linearity of the directional derivative of optimal solutions in the parametric program is required. We have accomplished this based on an appropriate form of the implicit function theorem \cite{dontchev2009implicit}. This is in contrast to  \cite{mishler2021fade} that relied on the structure of the smooth, closed-form solution estimator that enables direct use of the delta method. Lastly, our results in this section can be extended to a more general constrained nonlinear optimization problem where the objective function involves counterfactuals (see Lemmas \ref{lem:appendix-1}, \ref{lem:appendix-2} in the appendix).

\section{Simulation and Case Study} \label{sec:experiments}

\subsection{Simulation}



We explore the finite sample properties of our estimators in the simulated dataset where we aim to  empirically demonstrate the double-robustness property described in Section \ref{sec:estimation}. Our data generation process is as follows:
\begin{gather*}
    V \equiv X=(X_1, ... ,X_6) \sim N(0,I),\\
    \pi_a(X) = \text{expit}(-X_1 + 0.5X_2 - 0.25X_3 - 0.1X_4+0.05X_5 + 0.05X_6),\\
    Y = A\mathbbm{1}\left\{X_1 + 2X_2 - 2X_3 -X_4 + X_5 + \varepsilon > 0 \right\} + (1-A)\mathbbm{1}\left\{X_1 + 2X_2 - 2X_3 -X_4 + X_6 + \varepsilon < 0 \right\},\\
    \varepsilon \sim N(0,1).
\end{gather*}

Our classification target is $Y^1$. For $\bm{b}(X)$, we use $X$, $X^2$ and their pairwise products. We assume that we have box constraints for our solution: $\vert \beta_j^* \vert \leq 1$, $j=1,...,k$. Since there exist no other natural baselines, we compare our methods to the plug-in method where we use \eqref{eqn:plug-in-risk} for our approximating program \ref{eqn:approx-opt-problem}. For nuisance estimation we use the cross-validation-based Super Learner ensemble via the \textsc{SuperLearner} R package to combine generalized additive models, multivariate adaptive regression splines, and random forests. We use sample splitting as described in Algorithm \ref{algo:DR-estimator} with $K=2$ splits. We further consider two versions of each of our estimators, based on the correct and distorted $X$, where the distorted values are only used to estimate the outcome regression $\mu_a$. The distortion is caused by a transformation $X \mapsto (X_1X_3X_6, X_2^2, X_4/(1+\exp(X_5)), \exp(X_5/2))$. 

To solve \ref{eqn:approx-opt-problem}, we first use the StoGo algorithm \citep{norkin1998branch} via the \textsc{nloptr} R package as it has shown the best performance in terms of accuracy in the survey study of \citep{mullen2014continuous}. After running the StoGo, we then use the global optimum as a starting point for the BOBYQA local optimization algorithm \citep{powell2009bobyqa} to further polish the optimum to a greater accuracy.  We use sample sizes $n=1k, 2.5k, 5k, 7.5k, 10k$ and repeat the simulation $100$ times for each $n$. Then we compute the average of $\vert v^* - \widehat{v} \vert$ and $\Vert \beta^* - \widehat{\beta} \Vert_2$. Using the estimated counterfactual predictor, we also compute the classification error on an independent sample with the equal sample size. Standard error bars are presented around each point. The results with the correct and distorted $X$ are presented in Figures \ref{fig:mu-correct} and \ref{fig:mu-distorted}, respectively.

\begin{figure}[!t]
\centering
\begin{minipage}{.49\linewidth}
  \centering
  \includegraphics[width=\linewidth]{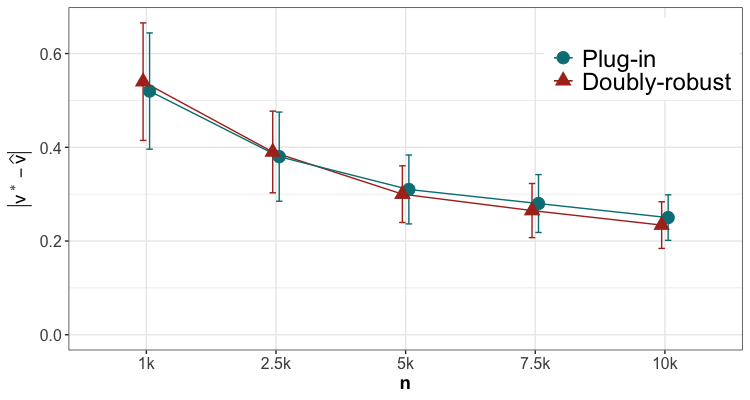}
\end{minipage}%
\hfill
\begin{minipage}{.49\linewidth}
  \centering
  \includegraphics[width=\linewidth]{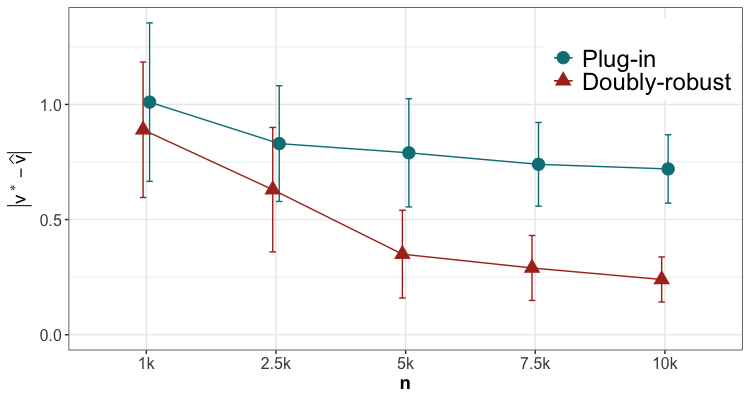}
\end{minipage}
\end{figure}
\begin{figure}[!t]
\centering
\begin{minipage}{.49\linewidth}
  \centering
  \includegraphics[width=\linewidth]{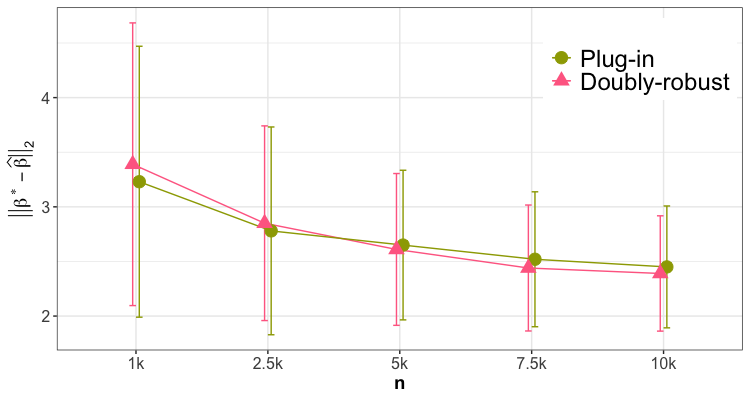}
\end{minipage}%
\hfill
\begin{minipage}{.49\linewidth}
  \centering
  \includegraphics[width=\linewidth]{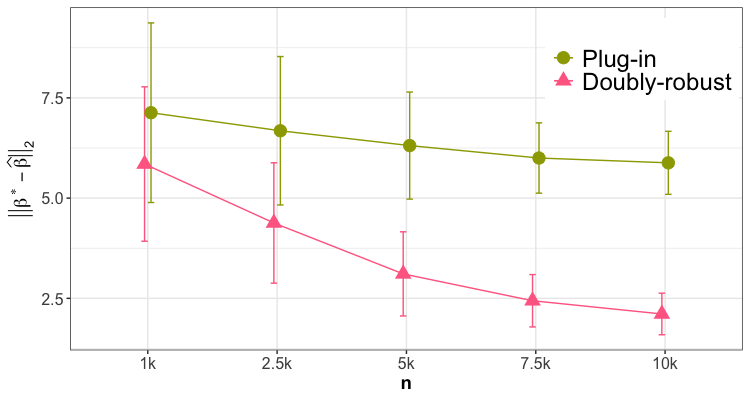}
\end{minipage}
\end{figure}
\begin{figure}[!t]
\centering
\begin{minipage}{.49\linewidth}
  \centering
  \includegraphics[width=\linewidth]{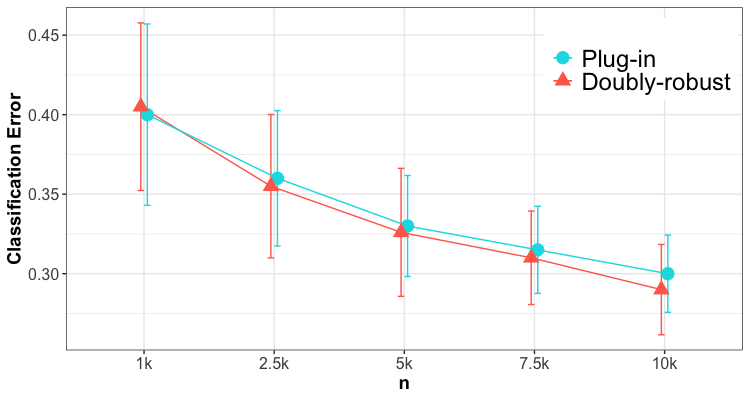}
  \captionof{figure}{With correct $X$}
  \label{fig:mu-correct}
\end{minipage}%
\hfill
\begin{minipage}{.49\linewidth}
  \centering
  \includegraphics[width=\linewidth]{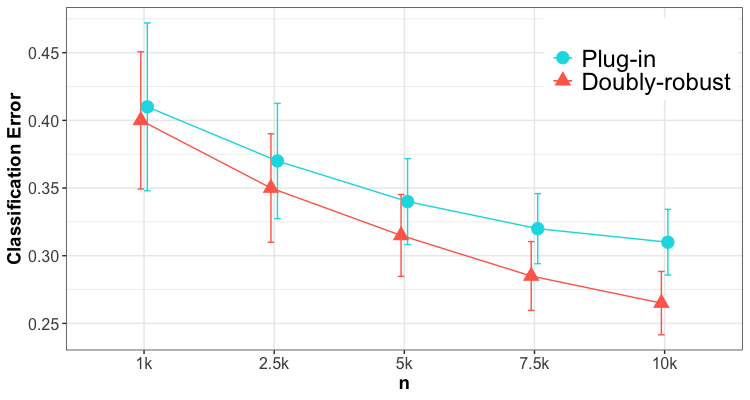}
  \captionof{figure}{With distorted $X$}
  \label{fig:mu-distorted}
\end{minipage}
\end{figure}

With the correct $X$, it appears that the proposed estimator performs as well or slightly better than the plug-in methods. However, in Figure \ref{fig:mu-distorted} when $\widehat{\mu}_a$ is constructed based on the distorted $X$, the proposed estimator gives substantially smaller errors in general and improves better with $n$. This is indicative of the fact that the proposed estimator has the doubly-robust, second-order multiplicative bias, thus supporting our theoretical results in Section \ref{sec:asymptotics}.

\subsection{Case Study: COMPAS Dataset}

Next we apply our method for recidivism risk prediction using the Correctional Offender Management Profiling for Alternative Sanctions (COMPAS) dataset \footnote{\url{https://github.com/propublica/compas-analysis}}. This dataset was originally designed to assess the COMPAS recidivism risk scores, and has been utilized for studying machine bias in the context of algorithmic fairness \citep{angwin2016machine}. More recently, the dataset has been reanalyzed in the framework of counterfactual outcomes \citep{mishler2021fairness, mishler2019modeling,mishler2021fade}. Here, we focus purely on predictive purpose. We let $A$ represent pretrial release, with $A = 0$ if defendants are released and $A=1$ if they are incarcerated, following methodology suggested by \cite{mishler2021fairness}.\footnote{The dataset itself does not include information whether defendants were released pretrial, but it includes dates in and out of jail. So we set the treatment $A$ to 0 if defendants left jail within three days of
being arrested, and 1 otherwise, as Florida state law generally requires individuals to be brought before a judge for a bail hearing within 2 days of arrest \citep[][Section 6.2]{mishler2021fairness}.} We aim to classify the binary counterfactual outcome $Y^0$ that indicates whether a defendant is rearrested within two years, should the defendant be released pretrial. We use the dataset for two-year recidivism records with five covariates: age, sex, number of prior arrests, charge degree, and race. We consider three racial groups: Black,  White, and Hispanic. We split the data ($n=5787$) randomly into two groups: a training set with 3000 observations and a test set with the rest. Other model settings remain the same as our simulation in the previous subsection, including the box constraints.


Figure \ref{fig:roc} and Table \ref{tbl:auc-acc} show that the proposed doubly-robust method achieves moderately higher ROC AUC and classification accuracy than both the plug-in and the raw COMPAS risk scores. This comparative advantage is likely to increase in settings where we expect the identification and regularity assumptions to be more likely to hold, for example, where we can have access to more covariates or more information about the treatment mechanism.

\begin{figure}[t!]
\centering
    \includegraphics[width=.7\linewidth]{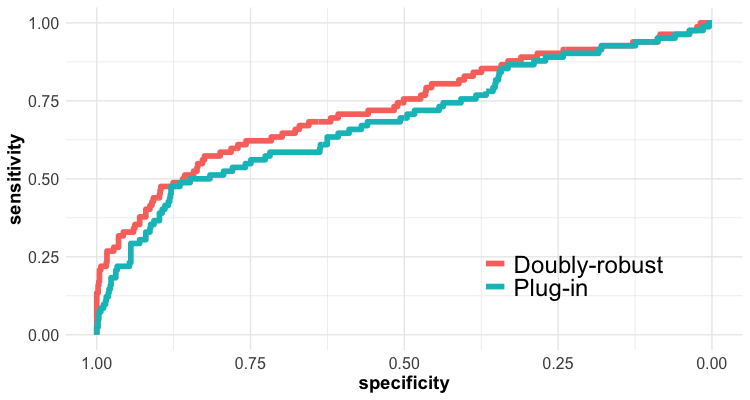}
    \caption{ROC curves} \label{fig:roc}
\end{figure}

\begin{table}[t!]
\centering
    \begin{tabular}{ccc}\hline
        Method & AUC &  Accuracy \\ \hline \hline
        Plug-in & 0.692 & 0.64 \\
        Doubly-Robust & \textbf{0.718} & \textbf{0.68} \\
        Raw COMPAS Score & 0.688 & 0.65 \\ \hline
     \end{tabular}
     \caption{AUC and classification accuracy} \label{tbl:auc-acc}
\end{table}


\section{Discussion} \label{sec:discussion}

In this paper we studied the problem of counterfactual classification under arbitrary smooth constraints, and proposed a doubly-robust estimator which leverages nonparametric machine learning methods. Our theoretical framework is not limited to counterfactual classification and can be applied to other settings where the estimand is the optimal solution of a general smooth nonlinear programming problem with a counterfactual objective function; thus, we complement the results of \cite{mishler2021fade, kim2022counterfactual}, each of which considered a particular class of smooth nonlinear programming.

We emphasize that one may use our proposed approach for other common problems in causal inference, e.g., estimation of the contrast effects or optimal treatment regimes, even under runtime confounding and/or other practical constraints. We may accomplish this by simply estimating each component $\E[Y^a \mid X]$ via solving \eqref{eqn:true-opt-problem} for different values of $a$, and then taking the conditional mean contrast of interest. We can also readily adapt our procedure \eqref{eqn:true-opt-problem} for such standard estimands, for example by replacing $Y^a$ with the desired contrast or utility formula, in which the influence function will be very similar to those already presented in our manuscript. In ongoing work, we develop extensions for estimating the CATE and optimal treatment regimes under fairness constraints.

Although not explored in this work, our estimation procedure could be improved by applying more sophisticated and flexible modeling techniques for solving \eqref{eqn:true-opt-problem}. One promising approach is to build a neural network that minimizes the loss \eqref{eqn:risk-estimator} with the nuisance estimates $\{\varphi_a(Z_i;\widehat{\eta}_{-B_K})\}_i$ constructed on the separate independent sample; in this case, $\beta$ is the weights of the network where $k \gg k^\prime$. Importantly, in the neural network approach we do not need to specify and construct the score and basis functions; the ideal form of those unknown functions are learned through backpropagation. Hence, we can avoid explicitly formulating and solving a complex non-convex optimization problem. Further, one may employ a rich source of deep-learning tools. In future work, we plan to pursue this extension and apply our methods to a large-scale real-world dataset.

We conclude with other potential limitations of our methods, and ways in which our work could be generalized. First, we considered the fixed feasible set that consists of only deterministic constraints. However, sometimes it may be useful to consider the general case where $g_j$'s need to be estimated as well. This can be particularly helpful when incorporating general fairness constraints \citep{hardt2016equality, mishler2021fade, mishler2021fairness}. Dealing with the varying feasible set with general nonlinear constraints is a complicated task and requires even stronger assumptions \citep{shapiro1993asymptotic}. As future work, we plan to generalize our framework to the case of a varying feasible set. Next, although we showed that the counterfactual objective function is estimated efficiently via $\widehat{\mathcal{L}}$, it is unclear whether the solution estimator $\widehat{\beta}$ is efficient too, due to the inherent complexity of the optimal solution mapping in the presence of constraints. We conjecture that one may show that the semiparametric efficiency bound can also be attained for $\widehat{\beta}$ possibly under slightly stronger regularity assumptions, but we leave this for future work.


\clearpage
\newpage

\bibliographystyle{plainnat}
\bibliography{bibliography}

\begin{thebibliography}{59}
\providecommand{\natexlab}[1]{#1}
\providecommand{\url}[1]{\texttt{#1}}
\expandafter\ifx\csname urlstyle\endcsname\relax
  \providecommand{\doi}[1]{doi: #1}\else
  \providecommand{\doi}{doi: \begingroup \urlstyle{rm}\Url}\fi

\bibitem[Alaa and van~der Schaar(2017)]{alaa2017bayesian}
Ahmed~M Alaa and Mihaela van~der Schaar.
\newblock Bayesian inference of individualized treatment effects using
  multi-task gaussian processes.
\newblock \emph{Advances in Neural Information Processing Systems}, 30, 2017.

\bibitem[Angwin et~al.(2016)Angwin, Larson, Mattu, and
  Kirchner]{angwin2016machine}
Julia Angwin, Jeff Larson, Surya Mattu, and Lauren Kirchner.
\newblock Machine bias.
\newblock In \emph{Ethics of Data and Analytics}, pages 254--264. Auerbach
  Publications, 2016.

\bibitem[Athey and Imbens(2016)]{athey2016recursive}
Susan Athey and Guido Imbens.
\newblock Recursive partitioning for heterogeneous causal effects.
\newblock \emph{Proceedings of the National Academy of Sciences}, 113\penalty0
  (27):\penalty0 7353--7360, 2016.

\bibitem[Bang and Robins(2005)]{bang2005doubly}
Heejung Bang and James~M Robins.
\newblock Doubly robust estimation in missing data and causal inference models.
\newblock \emph{Biometrics}, 61\penalty0 (4):\penalty0 962--973, 2005.

\bibitem[Boumal(2020)]{boumal2020introduction}
Nicolas Boumal.
\newblock An introduction to optimization on smooth manifolds.
\newblock \emph{Available online, May}, 3, 2020.

\bibitem[Calders et~al.(2013)Calders, Karim, Kamiran, Ali, and
  Zhang]{calders2013controlling}
Toon Calders, Asim Karim, Faisal Kamiran, Wasif Ali, and Xiangliang Zhang.
\newblock Controlling attribute effect in linear regression.
\newblock In \emph{2013 IEEE 13th international conference on data mining},
  pages 71--80. IEEE, 2013.

\bibitem[Chen et~al.(2021)Chen, Zhang, Han, and Lim]{chen2021hierarchical}
Hao Chen, Minguang Zhang, Lanshan Han, and Alvin Lim.
\newblock Hierarchical marketing mix models with sign constraints.
\newblock \emph{Journal of Applied Statistics}, 48\penalty0 (13-15):\penalty0
  2944--2960, 2021.

\bibitem[Chernozhukov et~al.(2017)Chernozhukov, Chetverikov, Demirer, Duflo,
  Hansen, and Newey]{Chernozhukov17}
Victor Chernozhukov, Denis Chetverikov, Mert Demirer, Esther Duflo, Christian
  Hansen, and Whitney Newey.
\newblock Double/debiased/neyman machine learning of treatment effects.
\newblock \emph{American Economic Review}, 107\penalty0 (5):\penalty0 261--65,
  May 2017.

\bibitem[Coston et~al.(2020)Coston, Kennedy, and
  Chouldechova]{coston2020RuntimeConfounding}
Amanda Coston, Edward Kennedy, and Alexandra Chouldechova.
\newblock Counterfactual predictions under runtime confounding.
\newblock In \emph{Advances in neural information processing systems},
  volume~33, pages 4150--4162, 2020.

\bibitem[Dickerman and Hern{\'a}n(2020)]{dickerman2020counterfactual}
Barbra~A Dickerman and Miguel~A Hern{\'a}n.
\newblock Counterfactual prediction is not only for causal inference.
\newblock \emph{European Journal of Epidemiology}, 35\penalty0 (7):\penalty0
  615--617, 2020.

\bibitem[Dontchev and Rockafellar(2009)]{dontchev2009implicit}
Asen~L Dontchev and R~Tyrrell Rockafellar.
\newblock \emph{Implicit functions and solution mappings}, volume 543.
\newblock Springer, 2009.

\bibitem[Fang and Santos(2019)]{fang2019inference}
Zheng Fang and Andres Santos.
\newblock Inference on directionally differentiable functions.
\newblock \emph{The Review of Economic Studies}, 86\penalty0 (1):\penalty0
  377--412, 2019.

\bibitem[Gaines et~al.(2018)Gaines, Kim, and Zhou]{gaines2018algorithms}
Brian~R Gaines, Juhyun Kim, and Hua Zhou.
\newblock Algorithms for fitting the constrained lasso.
\newblock \emph{Journal of Computational and Graphical Statistics}, 27\penalty0
  (4):\penalty0 861--871, 2018.

\bibitem[Hardt et~al.(2016)Hardt, Price, and Srebro]{hardt2016equality}
Moritz Hardt, Eric Price, and Nati Srebro.
\newblock Equality of opportunity in supervised learning.
\newblock \emph{Advances in neural information processing systems}, 29, 2016.

\bibitem[H{\"o}fler(2005)]{hofler2005causal}
Marc H{\"o}fler.
\newblock Causal inference based on counterfactuals.
\newblock \emph{BMC medical research methodology}, 5\penalty0 (1):\penalty0
  1--12, 2005.

\bibitem[Holland(1986)]{holland1986statistics}
Paul~W Holland.
\newblock Statistics and causal inference.
\newblock \emph{Journal of the American statistical Association}, 81\penalty0
  (396):\penalty0 945--960, 1986.

\bibitem[Imbens and Rubin(2015)]{imbens2015causal}
Guido~W Imbens and Donald~B Rubin.
\newblock \emph{Causal inference in statistics, social, and biomedical
  sciences}.
\newblock Cambridge University Press, 2015.

\bibitem[James et~al.(2013)James, Paulson, and
  Rusmevichientong]{james2013penalized}
Gareth~M James, Courtney Paulson, and Paat Rusmevichientong.
\newblock Penalized and constrained regression.
\newblock \emph{Unpublished manuscript, http://www-bcf. usc.
  edu/gareth/research/Research. html}, 2013.

\bibitem[James et~al.(2019)James, Paulson, and
  Rusmevichientong]{james2019penalized}
Gareth~M James, Courtney Paulson, and Paat Rusmevichientong.
\newblock Penalized and constrained optimization: an application to
  high-dimensional website advertising.
\newblock \emph{Journal of the American Statistical Association}, 2019.

\bibitem[Kennedy(2016)]{kennedy2016semiparametric}
Edward~H Kennedy.
\newblock Semiparametric theory and empirical processes in causal inference.
\newblock In \emph{Statistical causal inferences and their applications in
  public health research}, pages 141--167. Springer, 2016.

\bibitem[Kennedy(2017)]{kennedy2017semiparametric}
Edward~H Kennedy.
\newblock Semiparametric theory.
\newblock \emph{arXiv preprint arXiv:1709.06418}, 2017.

\bibitem[Kennedy(2020)]{kennedy2020optimal}
Edward~H Kennedy.
\newblock Optimal doubly robust estimation of heterogeneous causal effects.
\newblock \emph{arXiv preprint arXiv:2004.14497}, 2020.

\bibitem[Kennedy(2022)]{kennedy2022semiparametric}
Edward~H Kennedy.
\newblock Semiparametric doubly robust targeted double machine learning: a
  review.
\newblock \emph{arXiv preprint arXiv:2203.06469}, 2022.

\bibitem[Kim et~al.(2022)Kim, Mishler, and Zubizarreta]{kim2022counterfactual}
Kwangho Kim, Alan Mishler, and Jos{\'e}~R Zubizarreta.
\newblock Counterfactual mean-variance optimization.
\newblock \emph{arXiv preprint arXiv:2209.09538}, 2022.

\bibitem[K{\"u}nzel et~al.(2017)K{\"u}nzel, Sekhon, Bickel, and
  Yu]{kunzel2017meta}
S{\"o}ren~R K{\"u}nzel, Jasjeet~S Sekhon, Peter~J Bickel, and Bin Yu.
\newblock Meta-learners for estimating heterogeneous treatment effects using
  machine learning.
\newblock \emph{arXiv preprint arXiv:1706.03461}, 2017.

\bibitem[Li et~al.(2016)Li, Zhao, Tian, Cai, Claggett, Callegaro, Dizier,
  Spiessens, Ulloa-Montoya, and Wei]{li2016predictive}
Junlong Li, Lihui Zhao, Lu~Tian, Tianxi Cai, Brian Claggett, Andrea Callegaro,
  Benjamin Dizier, Bart Spiessens, Fernando Ulloa-Montoya, and Lee-Jen Wei.
\newblock A predictive enrichment procedure to identify potential responders to
  a new therapy for randomized, comparative controlled clinical studies.
\newblock \emph{Biometrics}, 72\penalty0 (3):\penalty0 877--887, 2016.

\bibitem[Lin et~al.(2021)Lin, Sperrin, Jenkins, Martin, and
  Peek]{lin2021scoping}
Lijing Lin, Matthew Sperrin, David~A Jenkins, Glen~P Martin, and Niels Peek.
\newblock A scoping review of causal methods enabling predictions under
  hypothetical interventions.
\newblock \emph{Diagnostic and prognostic research}, 5\penalty0 (1):\penalty0
  1--16, 2021.

\bibitem[Lu et~al.(2019)Lu, Shi, and Li]{lu2019generalized}
Jiarui Lu, Pixu Shi, and Hongzhe Li.
\newblock Generalized linear models with linear constraints for microbiome
  compositional data.
\newblock \emph{Biometrics}, 75\penalty0 (1):\penalty0 235--244, 2019.

\bibitem[Lu et~al.(2018)Lu, Sadiq, Feaster, and Ishwaran]{lu2018estimating}
Min Lu, Saad Sadiq, Daniel~J Feaster, and Hemant Ishwaran.
\newblock Estimating individual treatment effect in observational data using
  random forest methods.
\newblock \emph{Journal of Computational and Graphical Statistics}, 27\penalty0
  (1):\penalty0 209--219, 2018.

\bibitem[Luedtke and van~der Laan(2016{\natexlab{a}})]{luedtke2016optimal}
Alexander~R Luedtke and Mark~J van~der Laan.
\newblock Optimal individualized treatments in resource-limited settings.
\newblock \emph{The international journal of biostatistics}, 12\penalty0
  (1):\penalty0 283--303, 2016{\natexlab{a}}.

\bibitem[Luedtke and van~der Laan(2016{\natexlab{b}})]{luedtke2016super}
Alexander~R Luedtke and Mark~J van~der Laan.
\newblock Super-learning of an optimal dynamic treatment rule.
\newblock \emph{The international journal of biostatistics}, 12\penalty0
  (1):\penalty0 305--332, 2016{\natexlab{b}}.

\bibitem[Mishler(2019)]{mishler2019modeling}
Alan Mishler.
\newblock Modeling risk and achieving algorithmic fairness using potential
  outcomes.
\newblock In \emph{Proceedings of the 2019 AAAI/ACM Conference on AI, Ethics,
  and Society}, pages 555--556, 2019.

\bibitem[Mishler and Kennedy(2021)]{mishler2021fade}
Alan Mishler and Edward Kennedy.
\newblock Fade: Fair double ensemble learning for observable and counterfactual
  outcomes.
\newblock \emph{arXiv preprint arXiv:2109.00173}, 2021.

\bibitem[Mishler et~al.(2021)Mishler, Kennedy, and
  Chouldechova]{mishler2021fairness}
Alan Mishler, Edward~H Kennedy, and Alexandra Chouldechova.
\newblock Fairness in risk assessment instruments: Post-processing to achieve
  counterfactual equalized odds.
\newblock In \emph{Proceedings of the 2021 ACM Conference on Fairness,
  Accountability, and Transparency}, pages 386--400, 2021.

\bibitem[Mullen(2014)]{mullen2014continuous}
Katharine~M Mullen.
\newblock Continuous global optimization in r.
\newblock \emph{Journal of Statistical Software}, 60:\penalty0 1--45, 2014.

\bibitem[Nemirovski et~al.(2009)Nemirovski, Juditsky, Lan, and
  Shapiro]{nemirovski2009robust}
Arkadi Nemirovski, Anatoli Juditsky, Guanghui Lan, and Alexander Shapiro.
\newblock Robust stochastic approximation approach to stochastic programming.
\newblock \emph{SIAM Journal on optimization}, 19\penalty0 (4):\penalty0
  1574--1609, 2009.

\bibitem[Newey and Robins(2018)]{newey2018cross}
Whitney~K Newey and James~R Robins.
\newblock Cross-fitting and fast remainder rates for semiparametric estimation.
\newblock \emph{arXiv preprint arXiv:1801.09138}, 2018.

\bibitem[Nguyen et~al.(2020)Nguyen, Collins, Landais, and
  Le~Manach]{nguyen2020counterfactual}
Tri-Long Nguyen, Gary~S Collins, Paul Landais, and Yannick Le~Manach.
\newblock Counterfactual clinical prediction models could help to infer
  individualized treatment effects in randomized controlled trials—an
  illustration with the international stroke trial.
\newblock \emph{Journal of clinical epidemiology}, 125:\penalty0 47--56, 2020.

\bibitem[Nie and Wager(2017)]{nie2017quasi}
Xinkun Nie and Stefan Wager.
\newblock Quasi-oracle estimation of heterogeneous treatment effects.
\newblock \emph{arXiv preprint arXiv:1712.04912}, 2017.

\bibitem[Norkin et~al.(1998)Norkin, Pflug, and
  Ruszczy{\'n}ski]{norkin1998branch}
Vladimir~I Norkin, Georg~Ch Pflug, and Andrzej Ruszczy{\'n}ski.
\newblock A branch and bound method for stochastic global optimization.
\newblock \emph{Mathematical programming}, 83\penalty0 (1):\penalty0 425--450,
  1998.

\bibitem[Powell(2009)]{powell2009bobyqa}
Michael~JD Powell.
\newblock The bobyqa algorithm for bound constrained optimization without
  derivatives.
\newblock \emph{Cambridge NA Report NA2009/06, University of Cambridge,
  Cambridge}, 26, 2009.

\bibitem[Rapcs{\'a}k(2013)]{rapcsak2013smooth}
Tam{\'a}s Rapcs{\'a}k.
\newblock \emph{Smooth nonlinear optimization in Rn}, volume~19.
\newblock Springer Science \& Business Media, 2013.

\bibitem[Robins et~al.(2008)Robins, Li, Tchetgen, and van~der
  Vaart]{robins2008higher}
James Robins, Lingling Li, Eric Tchetgen, and Aad van~der Vaart.
\newblock Higher order influence functions and minimax estimation of nonlinear
  functionals.
\newblock In \emph{Probability and statistics: essays in honor of David A.
  Freedman}, pages 335--421. Institute of Mathematical Statistics, 2008.

\bibitem[Robins(2000)]{robins2000marginal}
James~M Robins.
\newblock Marginal structural models versus structural nested models as tools
  for causal inference.
\newblock In \emph{Statistical models in epidemiology, the environment, and
  clinical trials}, pages 95--133. Springer, 2000.

\bibitem[Rubin(1974)]{rubin1974estimating}
Donald~B Rubin.
\newblock Estimating causal effects of treatments in randomized and
  nonrandomized studies.
\newblock \emph{Journal of Educational Psychology}, 66\penalty0 (5):\penalty0
  688, 1974.

\bibitem[Schulam and Saria(2017)]{schulam2017reliable}
Peter Schulam and Suchi Saria.
\newblock Reliable decision support using counterfactual models.
\newblock \emph{Advances in Neural Information Processing Systems},
  30:\penalty0 1697--1708, 2017.

\bibitem[Shapiro(1991)]{shapiro1991asymptotic}
Alexander Shapiro.
\newblock Asymptotic analysis of stochastic programs.
\newblock \emph{Annals of Operations Research}, 30\penalty0 (1):\penalty0
  169--186, 1991.

\bibitem[Shapiro(1993)]{shapiro1993asymptotic}
Alexander Shapiro.
\newblock Asymptotic behavior of optimal solutions in stochastic programming.
\newblock \emph{Mathematics of Operations Research}, 18\penalty0 (4):\penalty0
  829--845, 1993.

\bibitem[Shapiro(2000)]{shapiro2000statistical}
Alexander Shapiro.
\newblock Statistical inference of stochastic optimization problems.
\newblock In \emph{Probabilistic constrained optimization}, pages 282--307.
  Springer, 2000.

\bibitem[Shapiro et~al.(2014)Shapiro, Dentcheva, and
  Ruszczy{\'n}ski]{shapiro2014lectures}
Alexander Shapiro, Darinka Dentcheva, and Andrzej Ruszczy{\'n}ski.
\newblock \emph{Lectures on stochastic programming: modeling and theory}.
\newblock SIAM, 2014.

\bibitem[Still(2018)]{still2018lectures}
Georg Still.
\newblock Lectures on parametric optimization: An introduction.
\newblock \emph{Optimization Online}, 2018.

\bibitem[Van~der Laan(2006)]{van2006statistical}
Mark~J Van~der Laan.
\newblock Statistical inference for variable importance.
\newblock \emph{The International Journal of Biostatistics}, 2\penalty0 (1),
  2006.

\bibitem[Van~der Vaart(2000)]{van2000asymptotic}
Aad~W Van~der Vaart.
\newblock \emph{Asymptotic statistics}, volume~3.
\newblock Cambridge university press, 2000.

\bibitem[van Geloven et~al.(2020)van Geloven, Swanson, Ramspek, Luijken, van
  Diepen, Morris, Groenwold, van Houwelingen, Putter, and
  le~Cessie]{van2020prediction}
Nan van Geloven, Sonja~A Swanson, Chava~L Ramspek, Kim Luijken, Merel van
  Diepen, Tim~P Morris, Rolf~HH Groenwold, Hans~C van Houwelingen, Hein Putter,
  and Saskia le~Cessie.
\newblock Prediction meets causal inference: the role of treatment in clinical
  prediction models.
\newblock \emph{European journal of epidemiology}, 35\penalty0 (7):\penalty0
  619--630, 2020.

\bibitem[Vansteelandt and Joffe(2014)]{vansteelandt2014structural}
Stijn Vansteelandt and Marshall Joffe.
\newblock Structural nested models and g-estimation: the partially realized
  promise.
\newblock \emph{Statistical Science}, 29\penalty0 (4):\penalty0 707--731, 2014.

\bibitem[Wachsmuth(2013)]{wachsmuth2013licq}
Gerd Wachsmuth.
\newblock On licq and the uniqueness of lagrange multipliers.
\newblock \emph{Operations Research Letters}, 41\penalty0 (1):\penalty0 78--80,
  2013.

\bibitem[Wager and Athey(2018)]{wager2018estimation}
Stefan Wager and Susan Athey.
\newblock Estimation and inference of heterogeneous treatment effects using
  random forests.
\newblock \emph{Journal of the American Statistical Association}, 113\penalty0
  (523):\penalty0 1228--1242, 2018.

\bibitem[Westreich and Greenland(2013)]{westreich2013table}
Daniel Westreich and Sander Greenland.
\newblock The table 2 fallacy: presenting and interpreting confounder and
  modifier coefficients.
\newblock \emph{American journal of epidemiology}, 177\penalty0 (4):\penalty0
  292--298, 2013.

\bibitem[Zheng and Van Der~Laan(2010)]{zheng2010asymptotic}
Wenjing Zheng and Mark~J Van Der~Laan.
\newblock Asymptotic theory for cross-valiyeard targeted maximum likelihood
  estimation.
\newblock \emph{Working Paper 273}, 2010.

\end{thebibliography}



\clearpage
\newpage

\onecolumn
\appendix

\begin{center}
{\large\bf APPENDIX}
\end{center}
\vspace*{.1in}

\section{Additional Technical Results}

\textbf{Extra notations.} We let $\mathbb{B}_r(z)$ denote an open ball of radius $r$ centered at $z$, and let $\Vert M \Vert_F$ denote the Frobenius norm. $\Vert \cdot \Vert_2$ is understood as the spectral norm when it is used with a matrix. Further, for any vector-valued function $h : \R^{d_\theta} \rightarrow \R^{l}$ of arbitrary dimensionality $l$ whose first-order partial derivatives exist, we denote its Jacobian matrix with respect to a variable $\theta$ by $\bm{J}_\theta (h) \in \R^{l \times d_\theta}$. 


Here we present additional notions and results which we will use for proofs. 

\begin{definition}[Quadratic growth condition]
For each $\beta^* \in \sol(\mathsf{P})$, there exists a neighborhood $\mathbb{B}_r(\beta^*)$ with some $r>0$ and a positive constant $\kappa$ such that
	    \[
	        \mathcal{L}(\beta) \geq \mathcal{L}(\beta^*) + \kappa \dist\left(\beta, \sol(\mathsf{P}) \right)
	    \]
	    for all $\beta \in \mathbb{B}_r(\beta^*)$.
\end{definition}
The above quadratic growth condition is widely used in nonlinear programming and can be ensured by various forms of second order sufficient conditions \citep[e.g.,][]{still2018lectures}. 
Next, we provide the following lemma that underpins the construction of our estimator in Section \ref{sec:estimation}.

\begin{lemma} \label{lem:double-robust}
For some fixed functions $g: \mathcal{Y} \rightarrow \R$ and $h: \mathcal{X} \rightarrow \R$, let $\mu_{g,a}=\E[g(Y) \mid X, A=a]$, so $\eta=\{\pi_a, \mu_{g,a}\}$. For any random variable $T$, let
\begin{align*}
    & \varphi_a(T;\eta) = \frac{\mathbbm{1}(A=a)}{\pi_a(X)}\left\{T- \E[T \mid X, A]\right\} + \E[T \mid X, A=a], 
\end{align*}
denote the uncentered efficient influence function for the parameter $\E\{\E[T \mid X,A=a]\}$. Also, define our parameter and the corresponding estimator by $\psi_{g,a} = \E[g(Y^a)h(X)]$ and $\widehat{\psi}_{g,a}=\Pn\{\varphi_a(g(Y);\widehat{\eta})h(X)\}$, respectively. 
If we assume that:
\begin{enumerate}[label={}, leftmargin=*]
	    \item \customlabel{assumption:D1}{(D1)} either i) $\widehat{\eta}$ are estimated using sample splitting or ii) the function class  $\{\varphi_a(\cdot;\eta): \eta \in (0,1)^2\times \R^2\}$ is Donsker in $\eta$
	    \item \customlabel{assumption:D2}{(D2)}  $\Pb(\widehat{\pi}_a \in [\epsilon, 1-\epsilon]) = 1$ for some $\epsilon > 0$ 
	    \item \customlabel{assumption:D3}{(D3)} $\Vert \varphi_a(\cdot;\widehat{\eta})- \varphi_a(\cdot;\eta) \Vert_{2,\Pb} = o_\Pb(1)$,
\end{enumerate}	    
Then we have
\begin{align*}
    \Vert \widehat{\psi}_{g,a}- \psi_{g,a} \Vert_2 &= O_\Pb\left(\Vert \widehat{\pi}_a - \pi_a \Vert_{2,\Pb} \Vert \widehat{\mu}_{g,a}- \mu_{g,a} \Vert_{2,\Pb} + n^{-1/2} \right).
\end{align*}
If we further assume that
\begin{enumerate}[label={}, leftmargin=*]	    
	    \item \customlabel{assumption:D4}{(D4)}  
	    $   \begin{aligned}[t]
            &\Vert \widehat{\psi}_{g,a} - {\psi}_{g,a} \Vert_{2,\Pb} \Vert \widehat{\mu}_{g,a} - \mu_{g,a} \Vert_{2,\Pb} = o_\Pb(n^{-1/2}), 
            \end{aligned} 
        $
\end{enumerate}
then
\begin{align} \label{eqn:asymptotic-psi}
    \sqrt{n}(\widehat{\psi}_{g,a} - {\psi}_{g,a}) \xrightarrow[]{d} N\Big(0,\var\big\{\varphi_a(g(Y);\eta)h(X)\big\}\Big),
\end{align}
and the estimator $\widehat{\psi}_{g,a} $  achieves the semiparametric efficiency bound, meaning that there are no regular asymptotically linear estimators that are asymptotically unbiased and with smaller variance\footnote{This is also a local asymptotic minimax lower bound.}.
\end{lemma}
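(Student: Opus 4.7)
\textbf{Proof plan for Lemma \ref{lem:double-robust}.} The plan is to follow the classical one-step / doubly-robust decomposition and exploit the product-bias structure of the efficient influence function. First I would write the standard decomposition
\begin{align*}
\widehat{\psi}_{g,a} - \psi_{g,a}
&= (\Pn - \Pb)\{\varphi_a(g(Y);\eta)h(X)\} \\
&\quad + (\Pn - \Pb)\{\varphi_a(g(Y);\widehat{\eta})h(X) - \varphi_a(g(Y);\eta)h(X)\} \\
&\quad + \Pb\{\varphi_a(g(Y);\widehat{\eta})h(X)\} - \psi_{g,a},
\end{align*}
and treat the three pieces separately. The first term is $O_\Pb(n^{-1/2})$ by the standard CLT, with asymptotic variance $\var\{\varphi_a(g(Y);\eta)h(X)\}$.

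The second (empirical process) term is where \ref{assumption:D1} enters. Under sample splitting, I would condition on the training fold, note that $\widehat{\eta}$ is then a fixed function, and apply Chebyshev's inequality: the conditional variance is bounded by $\Vert\varphi_a(\cdot;\widehat{\eta}) - \varphi_a(\cdot;\eta)\Vert_{2,\Pb}^2 \cdot \Vert h\Vert_\infty^2 /n$, which is $o_\Pb(n^{-1/2})$ using \ref{assumption:D3} (this is Lemma 2 of \cite{kennedy2020optimal} specialized to our setting). Under the Donsker alternative, asymptotic equicontinuity together with \ref{assumption:D3} yields the same conclusion.

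The bias term is the crux of double robustness. Using iterated expectations and \ref{assumption:c1}-\ref{assumption:c3}, a direct calculation gives
\begin{align*}
\Pb\{\varphi_a(g(Y);\widehat{\eta})h(X)\} - \psi_{g,a}
&= \E\!\left[\!\left(\frac{\pi_a(X) - \widehat{\pi}_a(X)}{\widehat{\pi}_a(X)}\right)(\mu_{g,a}(X) - \widehat{\mu}_{g,a}(X))\,h(X)\!\right],
\end{align*}
so the bias has a product structure. By \ref{assumption:D2}, $1/\widehat{\pi}_a$ is uniformly bounded, and (assuming $h$ is bounded, as is standard in this setting) Cauchy--Schwarz gives a bias of order $\Vert\widehat{\pi}_a - \pi_a\Vert_{2,\Pb}\Vert\widehat{\mu}_{g,a} - \mu_{g,a}\Vert_{2,\Pb}$. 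Combining the three bounds yields the claimed rate. Under the additional product-rate condition \ref{assumption:D4}, the bias is $o_\Pb(n^{-1/2})$, the empirical process term is already $o_\Pb(n^{-1/2})$, so only the CLT term survives asymptotically and Slutsky's theorem gives \eqref{eqn:asymptotic-psi}. Efficiency then follows because $\varphi_a(g(Y);\eta)h(X) - \psi_{g,a}$ is (by construction / a routine tangent-space calculation) the efficient influence function for the parameter $\E[g(Y^a)h(X)]$ in a nonparametric model; hence its variance is the semiparametric efficiency bound and $\widehat{\psi}_{g,a}$ attains it.

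The main obstacle I anticipate is controlling the empirical process term cleanly without additional regularity: under sample splitting the argument is short, but it requires carefully conditioning on the held-out training sample and tracking that the rate depends on \ref{assumption:D3}; under the Donsker route one needs to verify that the estimated nuisance stays in the Donsker class. Everything else (the bias calculation and the CLT step) is essentially algebraic once the decomposition is set up, and the efficient-influence-function identification is standard from the semiparametric literature \citep{kennedy2022semiparametric}.
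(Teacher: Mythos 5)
Your proposal is correct and takes essentially the same route the paper relies on: the paper's own proof is only a brief sketch that verifies the efficient influence function of $\E[\mu_{g,a}(X)h(X)]$ is $\varphi_a(g(Y);\eta)h(X)$ (multiplication by a known function of $X$ passes straight through to the influence function) and then defers the rest to the conventional doubly-robust analysis, which is exactly the three-term decomposition, product-bias calculation, and sample-splitting/Chebyshev control of the empirical-process term that you spell out. Your write-up is in fact more explicit than the paper's sketch, and no step in it is at odds with what the paper invokes.
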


\begin{proof}
    The proof is indeed very similar to that of the conventional doubly robust estimator for the mean potential outcome, and we only give a brief sketch here.

    Let us introduce an operator $\mathcal{IF}: \psi \rightarrow \varphi$ that maps functionals $\psi:\Pb \rightarrow \R$ to their influence functions $\varphi \in L_2(\Pb)$. Then it suffices to show that $\mathcal{IF}(\psi_{g,a}) = \mathcal{IF}(\E[\mu_{g,a}(X)h(X)]) = \varphi_a(g(Y);\eta)h(X)$. In the derivation of the efficient influence function of the general regression function in Section 3.4 of \cite{kennedy2022semiparametric}, when $h$ is known and only depends on $X$, it is clear to see that    pathwise differentiability \citep[][Equation (6)]{kennedy2022semiparametric} still holds when $h(x)$ is multiplied and thus
    \begin{align*}
       \mathcal{IF}(\mu_{g,a}(x)h(x)) &= \frac{\mathbbm{1}(X=x, A=a)}{\Pb(X=x, A=a)}\left\{g(Y)h(x) -  \mu_{g,a}(x)h(x)\right\}\\
       &= \mathcal{IF}(\mu_{g,a}(X))h(X).
    \end{align*}
    Hence, $\mathcal{IF}(\E[\mu_{g,a}(X)h(X)]) = \varphi_a(g(Y);\eta)h(X)$.

    Another way to see this is that since the influence function is basically a (pathwise) derivative (i.e., Gateaux derivative) we can think of multiplying by $h(x)$ as multiplying by a constant, which does not change the form of the original derivative, beyond multiplying by the "constant" $h(x)$. We refer the reader to \cite{kennedy2022semiparametric} and references therein for more details about the efficient influence function and influence function-based estimators.    
\end{proof}




\section{Proofs}\label{appendix:proof}

For proofs, let us consider the following more general form of stochastic nonlinear programming with deterministic constraints and some finite-dimensional decision variable $x$ in some compact subset $\mathcal{S} \in \R^k$:

\begin{minipage}{.5\textwidth}
\begin{equation}
\label{eqn:true-general-nlp}
    \begin{aligned}
        &\underset{x \in \mathcal{S}}{\text{minimize}} \quad f(x) \\
        & \text{subject to} \quad g_j(x) \leq 0, \quad j=1,...,m
    \end{aligned} \tag{$\mathsf{P}_{nl}$}  
\end{equation} 
\end{minipage}%
\hfill
\begin{minipage}{.5\textwidth}
\begin{equation}
\label{eqn:approx-general-nlp}
    \begin{aligned}
        & \underset{x \in \mathcal{S}}{\text{minimize}} \quad \widehat{f}(x) \\
        & \text{subject to} \quad g_j(x) \leq 0, \quad j=1,...,m.
    \end{aligned} \tag{$\widehat{\mathsf{P}}_{nl}$}  
\end{equation}
\end{minipage}

We consider the case that $f, \widehat{f}$ are $C^1$ functions.
In the proofs, the active set $J_0$ is defined with respect to $\mathsf{P}_{nl}$.

\subsection{Proof of Theorem \ref{thm:convergence-rates}}
\label{appendix:proof-convergence-rates}

\begin{lemma} \label{lem:appendix-1}
Let $\widehat{x} \in \sol\left({\text{\ref{eqn:approx-general-nlp}}}\right)$ and assume that $f$ is twice differentiable with Hessian positive definite. Then under Assumption \ref{assumption:B1} we have
\begin{align*}
    \dist\left(\widehat{x} , \sol(\text{\ref{eqn:true-general-nlp}})\right) &= O\left( \sup_{x'} \Vert \nabla_x\widehat{f}(x') - \nabla_xf(x') \Vert  \right).
\end{align*}
\end{lemma}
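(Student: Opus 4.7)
The plan is to combine a local quadratic–growth inequality for the true program at an optimal solution $x^*$ with the first-order optimality of $\widehat{x}$ for the perturbed program (which shares the same feasible set), in a standard sandwich argument. First I would establish that for every $x^* \in \sol(\text{\ref{eqn:true-general-nlp}})$ there exist constants $r, \kappa > 0$ such that
\begin{equation*}
    f(x) \geq f(x^*) + \kappa\, \dist\!\left(x, \sol(\text{\ref{eqn:true-general-nlp}})\right)^2
\end{equation*}
for every feasible $x \in \mathbb{B}_r(x^*)$. Positive definiteness of $\nabla^2 f(x^*)$ together with Assumption \ref{assumption:B1} guarantees that the Hessian of the Lagrangian $\nabla^2 f(x^*) + \sum_{j \in J_0(x^*)} \gamma_j^*\nabla^2 g_j(x^*)$ is positive definite on the critical cone for any nonnegative KKT multipliers $\gamma^*$. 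This is the classical second-order sufficient condition, which is known to imply the local quadratic growth displayed above.

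Next I would exploit that both $x^*$ and $\widehat{x}$ lie in the common feasible set while $\widehat{x}$ minimizes $\widehat{f}$ there, so that $\widehat{f}(\widehat{x}) \leq \widehat{f}(x^*)$. Hence
\begin{align*}
    f(\widehat{x}) - f(x^*)
    &\leq [f(\widehat{x}) - \widehat{f}(\widehat{x})] - [f(x^*) - \widehat{f}(x^*)] \\
    &= \int_0^1 \bigl[\nabla f - \nabla \widehat{f}\bigr]\!\bigl(x^* + t(\widehat{x} - x^*)\bigr)^\top (\widehat{x} - x^*)\, dt \\
    &\leq \sup_{x'} \|\nabla f(x') - \nabla \widehat{f}(x')\|\, \|\widehat{x} - x^*\|,
\end{align*}
by the fundamental theorem of calculus and Cauchy--Schwarz. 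Choosing $x^*$ to be the projection of $\widehat{x}$ onto $\sol(\text{\ref{eqn:true-general-nlp}})$ gives $\|\widehat{x} - x^*\| = \dist\!\left(\widehat{x},\sol(\text{\ref{eqn:true-general-nlp}})\right)$. Substituting the quadratic-growth bound on the left and cancelling one power of $\dist$ yields
\begin{equation*}
    \dist\!\left(\widehat{x},\sol(\text{\ref{eqn:true-general-nlp}})\right) \leq \tfrac{1}{\kappa}\, \sup_{x'} \|\nabla f(x') - \nabla \widehat{f}(x')\|,
\end{equation*}
which is the claimed bound.

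The main obstacle is ensuring that $\widehat{x}$ actually lies in the neighborhood $\mathbb{B}_r(x^*)$ where the quadratic-growth inequality is valid, since otherwise the third step cannot be invoked. This requires a preliminary consistency argument. Because $\mathcal{S}$ is compact and $f, \widehat{f}$ are $C^1$, smallness of $\sup_{x'}\|\nabla \widehat{f}(x') - \nabla f(x')\|$ (together with a single reference value) forces $\sup_{x \in \mathcal{S}}|\widehat{f}(x) - f(x)|$ to be small, and a standard Berge-type argument on argmins of uniformly convergent continuous functions over a compact domain then gives $\dist(\widehat{x},\sol(\text{\ref{eqn:true-general-nlp}})) \to 0$ as the gradient gap shrinks. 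Once this consistency is in place, $\widehat{x}$ eventually belongs to $\mathbb{B}_r(x^*)$ and the quadratic-growth, optimality, and mean-value-theorem pieces fit together to produce the stated $O(\cdot)$ bound.
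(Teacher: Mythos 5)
Your proof is correct and follows essentially the same route as the paper: both derive local quadratic growth at $x^*$ from the positive-definite Hessian of $f$ together with \ref{assumption:B1} (via the second-order sufficient condition), and both then convert the sup-norm gradient gap into a bound on $\dist\left(\widehat{x},\sol(\text{\ref{eqn:true-general-nlp}})\right)$ --- the paper by citing Lemma~4.1 of \cite{shapiro1991asymptotic} plus the mean value theorem, you by reproving that lemma through the sandwich inequality $f(\widehat{x})-f(x^*)\le \bigl[f(\widehat{x})-\widehat{f}(\widehat{x})\bigr]-\bigl[f(x^*)-\widehat{f}(x^*)\bigr]$ and cancellation of one power of the distance. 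Your explicit preliminary consistency step, ensuring $\widehat{x}$ eventually lies in the neighborhood where quadratic growth holds, is a detail the paper leaves implicit but is indeed required in both arguments.
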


\begin{proof}
Due to the positive definiteness of the Hessian of $f$, from the KKT condition at $x^* \in \sol(\text{\ref{eqn:true-general-nlp}})$ with multipliers $\gamma_j^*$
\begin{align*}
    \nabla_x L(x^*,\gamma^*) =  \nabla_x f(x^*) + \underset{j \in J_0(x^*)}{\sum}\gamma_j^*  \nabla_x g_j(x^*) = 0,
\end{align*}
it follows that the following second order condition holds:
\begin{align*}
    d^\top \nabla_x^2 L(x^*,\gamma^*) d >0 \quad \forall d.
\end{align*}

Hence, by \citet[][Theorem 2.4]{still2018lectures} the quadratic growth condition holds at $x^*$. Then by \citet[][Lemma 4.1]{shapiro1991asymptotic} and the mean value theorem, we have
\begin{align*}
    \dist\left(\widehat{x} , \sol(\text{\ref{eqn:true-general-nlp}})\right) \leq \alpha \left( \sup_{x'} \Vert \nabla_x\widehat{f}(x') - \nabla_xf(x') \Vert \right)
\end{align*}
for some constant $\alpha > 0$, which completes the proof.
\end{proof}

Now, by the fact that both of the objective functions in \eqref{eqn:true-opt-problem} and \eqref{eqn:approx-opt-problem} are differentiable with respect to $\beta$, by Lemma \ref{lem:double-robust} and \ref{lem:appendix-1}, we obtain the result.

\subsection{Proof of Theorem \ref{thm:asymptotics}}
\label{appendix:proof-asymptotics}

\begin{lemma} \label{lem:appendix-2}
Assume that $f$ is twice differentiable whose Hessian is positive definite. Then under Assumption \ref{assumption:B1}, \ref{assumption:B2}, if LICQ and SC hold at $x^*$, we have
\begin{align*}
    n^{1/2}\left(\widehat{x} - x^* \right) \xrightarrow{d}  \begin{bmatrix}
        \nabla_x^2f(x^*) + \sum_j\gamma_j^*\nabla_x^2g_j(x^*) & \mathsf{B}(x^*) \\
        \mathsf{B}^\top(x^*) & 0
        \end{bmatrix}^{-1} \begin{bmatrix}
        \bm{1} \\
        \bm{0}
        \end{bmatrix}\Upsilon,
\end{align*}
where 
\begin{align*}
    n^{1/2}\left(\nabla_x\widehat{f}(x^*) - \nabla_xf(x^*)\right) \xrightarrow{d} \Upsilon.
\end{align*}
\end{lemma}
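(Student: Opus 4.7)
The strategy is the classical sensitivity analysis for parametric nonlinear programs developed by Shapiro (1991, 1993), viewing $\widehat{f}$ as a stochastic perturbation of $f$ and applying an implicit function theorem to the KKT system. First, I would establish that $\widehat{x}$ is consistent and that the active index set stabilizes: Lemma \ref{lem:appendix-1} already yields $\widehat{x} \to x^*$ in probability, and strict complementarity (SC) guarantees that with probability tending to one the active set at $\widehat{x}$ coincides with $J_0(x^*)$, while the inactive constraints remain strictly inactive by continuity. Consequently, for $n$ large enough the problem \ref{eqn:approx-general-nlp} can be locally replaced by the equality-constrained system consisting of the stationarity equation and $g_j(x)=0$ for $j \in J_0(x^*)$.

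Next, introduce the KKT map $F:\R^{k}\times\R^{|J_0(x^*)|}\times\mathcal{F}\to \R^{k+|J_0(x^*)|}$ defined by
\begin{align*}
F(x,\gamma,f) \;=\; \begin{pmatrix}
\nabla_x f(x) + \sum_{j\in J_0(x^*)} \gamma_j\,\nabla_x g_j(x) \\[2pt]
\bigl(g_j(x)\bigr)_{j\in J_0(x^*)}
\end{pmatrix},
\end{align*}
which vanishes at $(x^*,\gamma^*,f)$ by the KKT conditions. Its Jacobian in $(x,\gamma)$ at that point is precisely the saddle-point matrix
\begin{align*}
M \;=\; \begin{bmatrix}
\nabla_x^2 f(x^*) + \sum_j \gamma_j^* \nabla_x^2 g_j(x^*) & \mathsf{B}(x^*) \\
\mathsf{B}(x^*)^\top & 0
\end{bmatrix}.
\end{align*}
The key algebraic step is showing $M$ is invertible: LICQ gives $\mathsf{B}(x^*)$ full column rank, and a standard Haynsworth/Schur-complement argument combined with the hypothesis that $\nabla_x^2 f$ is positive definite together with Assumption~\ref{assumption:B1} (giving $\sum_j \gamma_j^* \nabla_x^2 g_j(x^*) \succeq 0$ on the tangent space, since $\gamma_j^*>0$ by SC) yields invertibility. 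Then I would invoke the implicit function theorem of \citet{dontchev2009implicit} in the appropriate functional form: on a neighborhood of $f$, the KKT system has a unique $C^1$ solution map $f\mapsto(x(f),\gamma(f))$, whose Fréchet derivative in the direction $h=\widehat{f}-f$ is obtained by differentiating $F(x(f),\gamma(f),f)=0$, giving
\begin{align*}
\begin{pmatrix} x(\widehat{f})-x^* \\ \gamma(\widehat{f})-\gamma^* \end{pmatrix} \;=\; -M^{-1}\begin{pmatrix} \nabla_x \widehat{f}(x^*)-\nabla_x f(x^*) \\ \mathbf{0} \end{pmatrix} \;+\; o_\Pb\bigl(\Vert \nabla_x\widehat{f}(x^*)-\nabla_x f(x^*)\Vert\bigr).
\end{align*}

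Finally, since $\widehat{x}$ agrees with $x(\widehat{f})$ on the stabilized active set with probability tending to one, I would multiply through by $n^{1/2}$, extract the first $k$ coordinates (this is what the block selector $[\bm{1};\,\bm{0}]$ in the statement denotes), and apply Slutsky together with the hypothesis $n^{1/2}(\nabla_x\widehat{f}(x^*)-\nabla_x f(x^*))\xrightarrow{d}\Upsilon$ to conclude. The main obstacle I anticipate is verifying invertibility of $M$ carefully under the combination of \ref{assumption:B1}, SC, and positive definiteness of $\nabla_x^2 f$ (rather than of the full Lagrangian Hessian on all of $\R^k$), and justifying the application of the implicit function theorem in the infinite-dimensional direction $\widehat{f}-f$ with sufficient uniformity so that the remainder collapses to $o_\Pb(n^{-1/2})$; the remaining steps (active-set identification, linearization, continuous mapping) are essentially routine once these are in hand.
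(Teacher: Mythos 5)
Your overall architecture—linearize the KKT system, invert the saddle-point matrix, apply Slutsky—matches the paper's, but you have left a genuine gap at precisely the step that carries the technical weight. You propose to apply an implicit function theorem directly in the infinite-dimensional direction $\widehat{f}-f$ and then assert that $\widehat{x}$ agrees with $x(\widehat{f})$ up to $o_\Pb(n^{-1/2})$; you yourself flag the uniformity needed for the remainder as "the main obstacle," but you do not resolve it, and it is not routine. The paper avoids this entirely by a reduction you are missing: it introduces the auxiliary \emph{finite-dimensional} parametric program \eqref{eqn:aux-opt-problem-parametric}, in which the perturbation enters only through the linear term $x^\top\xi$ with $\xi=\nabla_x\widehat{f}(x^*)-\nabla_x f(x^*)\in\R^k$, and invokes \citet[Theorem 3.1]{shapiro1993asymptotic} to get
\begin{align*}
\widehat{x}=\bar{x}(\xi)+o_\Pb(n^{-1/2}),
\end{align*}
where the hypothesis of that theorem—the uniform quadratic growth condition at $\bar{x}(\xi)$—is verified from positive definiteness of $\nabla_x^2 f$ together with LICQ (building on the pointwise growth condition already established in the proof of Lemma \ref{lem:appendix-1} via \ref{assumption:B1}). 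Only after this reduction does the classical finite-dimensional implicit function theorem of \citet{dontchev2009implicit} apply, to the map $H(x,\xi,\gamma)$ with complementarity written as $\diag(\gamma)g(x)=0$ (which, under SC and LICQ, also handles your active-set stabilization step without a separate argument). Without Shapiro's expansion, your claim that "$\widehat{x}$ agrees with $x(\widehat{f})$ on the stabilized active set" plus "continuous mapping" does not deliver the $o_\Pb(n^{-1/2})$ remainder: you would need uniform convergence of $\nabla_x\widehat{f}$ near $x^*$ together with a stability estimate for the perturbed KKT system, which is exactly the content of the theorem you are not citing.

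Two smaller points. First, your Schur-complement argument for invertibility of $M$ is more than the paper needs; positive definiteness of the Lagrangian Hessian on the tangent space plus full column rank of $\mathsf{B}(x^*)$ from LICQ suffices, and the paper simply cites \citet[Ex 4.5]{still2018lectures}. Second, your reading of $[\bm{1};\bm{0}]$ as a coordinate selector differs from the paper, where it arises as $\bm{J}_\xi H$ (the identity stacked on zero, because $\xi$ enters only the stationarity block); your interpretation yields the correct $k$-dimensional limit, but it is not how the formula is derived.
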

\begin{proof}

First consider the following auxiliary parametric program with respect to \eqref{eqn:true-general-nlp} with the parameter vector $\xi \in \R^k$.
\begin{equation}
\label{eqn:aux-opt-problem-parametric}
\begin{aligned}
    & \underset{x \in \mathcal{S}}{\text{minimize}} \quad f(x) + x^\top\xi \\
    & \text{subject to } \quad g_j(x) \leq 0, \quad j=1,...,m.
\end{aligned}     \tag{$\mathsf{P}_\xi$}   
\end{equation}
\eqref{eqn:aux-opt-problem-parametric} can be viewed as a perturbed program of \eqref{eqn:true-general-nlp}; for $\xi=0$, \eqref{eqn:aux-opt-problem-parametric} coincides with the program \eqref{eqn:true-general-nlp}. Here, the parameter $\xi$ will play a role of medium that contain all relevant stochastic information in \eqref{eqn:approx-general-nlp} \citep{shapiro1993asymptotic}. Let $\bar{x}(\xi)$ denote the solution of the program \ref{eqn:aux-opt-problem-parametric}. Clearly, we get $\bar{x}(0) = x^*$.

We have already shown that $\widehat{x} \xrightarrow[]{p} x^*$ at the rate of $n^{1/2}$ and that the quadratic growth condition holds at $x^*$ under the given conditions in Theorem \ref{thm:convergence-rates}. Further, since the Hessian $\nabla_x^2f(x^*)$ is positive definite and LICQ holds at $x^*$, the uniform version of the quadratic growth condition also holds at $\bar{x}(\xi)$ (see \citet[][Assumption A3]{shapiro1993asymptotic}). Hence by \citet[][Theorem 3.1]{shapiro1993asymptotic}, we get
\begin{align*}
    \widehat{x} = \bar{x}(\xi) + o_\Pb(n^{-1/2})
\end{align*}
where 
\begin{align*}
    \xi = 
    \nabla_x\widehat{f}(x^*) - \nabla_xf(x^*). 
\end{align*}
If $\bar{x}(\xi)$ is Frechet differentiable at $\xi=0$, we have
\begin{align*}
    \bar{x}(\xi) - x^* = D_0\bar{x}(\xi) + o(\Vert\xi\Vert),
\end{align*}
where the mapping $D_0\bar{x}: \R^k \rightarrow \R^k$ is the directional derivative of $\bar{x}(\cdot)$ at $\xi=0$. Since $\bar{x}(0) = x^*$, this leads to
\begin{align*}
    n^{1/2}\left(\widehat{x} - x^* \right) = D_0\bar{x}(n^{1/2}\xi) + o_\Pb(1).
\end{align*}

Now we shall show that such mapping $D_0\bar{x}(\cdot)$ exists and is indeed linear. To this end, we will show that $\bar{x}(\xi)$ is locally totally differentiable at $\xi = 0$, followed by applying an appropriate form of the implicit function theorem. Define a vector-valued function $H \in \R^{(k+m)}$ by 
\[
H(x, \xi, \gamma) = \begin{pmatrix}
\nabla_xf(x) + \sum_j\gamma_j\nabla_xg_j(x) + \xi \\
\diag(\gamma)(g(x))
\end{pmatrix}
\] 
where a vector $g$ is understood as a stacked version of $g_j's$. Due to the SC and LICQ conditions, the solution of $H(x, \xi, \gamma)=0$ satisfies the KKT condition for \eqref{eqn:aux-opt-problem-parametric}: i.e., $H(\bar{x}(\xi), \xi, \bar{\gamma}(\xi))=0$ where $\bar{\gamma}(\xi)$ is the corresponding multipliers. Now by the classical implicit function theorem \citep[e.g.,][Theorem 1B.1]{dontchev2009implicit} and the local stability result \citep[][Theorem 4.4]{still2018lectures}, there always exists a neighborhood $\mathbb{B}_{\bar{r}}(0)$, for some $\bar{r}>0$, of $\xi = 0$ such that $\bar{x}(\xi)$ and its total derivative exist for $\forall \xi \in \mathbb{B}_{\bar{r}}(0)$. In particular, the derivative at $\xi=0$ is computed by
\[
\nabla_\xi \bar{x}(0) = - \begin{matrix}
\bm{J}_{x,\gamma} H(\bar{x}(0), 0, \bar{\gamma}(0))
\end{matrix}^{-1}
\begin{bmatrix}
\bm{J}_{\xi} H(\bar{x}(0), 0, \bar{\gamma}(0))
\end{bmatrix},
\]
where in our case $\bar{x}(0) = x^*, \bar{\gamma}(0) = \gamma^*$, and thus
\[
\bm{J}_{x,\gamma} H(\bar{x}(0), 0, \bar{\gamma}(0)) = \begin{bmatrix}
        \nabla_x^2f(x^*) + \sum_j\gamma_j^*\nabla_x^2g_j(x^*) & \mathsf{B}(x^*) \\
        \mathsf{B}^\top(x^*) & 0
        \end{bmatrix},
\]
with $\mathsf{B} = \left[\nabla_x g_j(x^*)^\top, \, j \in J_0(x^*) \right]$, and
\[
\bm{J}_{\xi} H(\bar{x}(0), 0, \bar{\gamma}(0)) = \begin{bmatrix}
        \bm{1} \\
        \bm{0}
        \end{bmatrix}.
\]
Here the inverse of $\begin{matrix}
\bm{J}_{x,\gamma} H(\bar{x}(0), 0, \bar{\gamma}(0))
\end{matrix}$ always exists (see \citet[][Ex 4.5]{still2018lectures}). Therefore we obtain that
\begin{align*}
    D_0\bar{x}(n^{1/2}\xi) = \begin{bmatrix}
        \nabla_x^2f(x^*) + \sum_j\gamma_j^*\nabla_x^2g_j(x^*) & \mathsf{B}(x^*) \\
        \mathsf{B}^\top(x^*) & 0
        \end{bmatrix}^{-1} \begin{bmatrix}
        \bm{1} \\
        \bm{0}
        \end{bmatrix} n^{1/2}\xi.
\end{align*}
Finally, if $n^{1/2}\xi \xrightarrow{d} \Upsilon$, by Slutsky's theorem it follows 
\begin{align*}
    n^{1/2}\left(\widehat{x} - x^* \right) \xrightarrow{d}  \begin{bmatrix}
        \nabla_x^2f(x^*) + \sum_j\gamma_j^*\nabla_x^2g_j(x^*) & \mathsf{B}(x^*) \\
        \mathsf{B}^\top(x^*) & 0
        \end{bmatrix}^{-1} \begin{bmatrix}
        \bm{1} \\
        \bm{0}
        \end{bmatrix}\Upsilon.
\end{align*}
\end{proof}

Then, the desired result for Theorem \ref{thm:asymptotics} immediately follows by the fact that
\begin{align*}
    \nabla_\beta \mathcal{L} = - \E\left\{ Y^a(Z;\eta)h_1(V,\beta)+(1-Y^a)h_0(V,\beta) \right\}
\end{align*}
where
\begin{align*}
    h_1(V,\beta) & = \frac{1}{\log \sigma(\beta^\top\bm{b}(V))}\bm{b}(V)\sigma(\beta^\top\bm{b}(V))\{1 - \sigma(\beta^\top\bm{b}(V))\}, \\
    h_0(V,\beta) & = -\frac{1}{\log(1- \sigma(\beta^\top\bm{b}(V)))}\bm{b}(V)\sigma(\beta^\top\bm{b}(V))\{1 - \sigma(\beta^\top\bm{b}(V))\},
\end{align*}
followed by applying Lemma \ref{lem:double-robust}.

\end{document}